\documentclass[journal]{IEEEtran}

\usepackage{amsmath,amssymb,amsfonts}
\usepackage{amsthm}
\usepackage{graphicx}
\usepackage{booktabs}
\usepackage{bm}
\usepackage{cite}
\usepackage{enumitem}
\usepackage[colorlinks=true,linkcolor=blue,citecolor=blue,urlcolor=blue]{hyperref}

\newtheorem{theorem}{Theorem}
\newtheorem{proposition}{Proposition}
\newtheorem{corollary}{Corollary}
\newtheorem{remark}{Remark}
\newtheorem{definition}{Definition}

\newcommand{\Lam}{\Lambda(q)}
\newcommand{\Laminv}{\Lambda^{-1}(q)}
\newcommand{\Fmpc}{F_{\text{mpc}}}
\newcommand{\Fmpck}[1]{F_{\text{mpc},#1}}
\newcommand{\dhat}{\hat{d}}
\newcommand{\Jvb}{\bar{J}_v}
\newcommand{\Nbar}{\bar{N}}

\let\OLDthebibliography\thebibliography
\renewcommand\thebibliography[1]{%
  \footnotesize
  \OLDthebibliography{#1}%
  \setlength{\itemsep}{-1.5pt}%
  \setlength{\parsep}{0pt}%
  \setlength{\topsep}{0pt}%
}

\begin{document}

\title{Toward Interaction Dynamics: A Predictive Framework for Safe Physical Human Robot Interaction}

\author{Yongyan~Cao$^{1}$ and Jinshan~Tang$^{2}$%
\thanks{$^{1}$Y. Cao is with Voryx Robotics LLC, San Jose, CA 95136, USA
(e-mail: yongyancao@gmail.com).}%
\thanks{$^{2}$J. Tang is with the Department of Health Administration and
Policy, George Mason University, Fairfax, VA, USA
(e-mail: jtang25@gmu.edu).}}

\maketitle

\begin{abstract}
Physical human--robot interaction requires yielding transiently to contact yet
recovering the commanded reference under sustained load.  Finite-stiffness
impedance control retains a static deflection there, while predictive
alternatives typically optimize a nonlinear robot or impedance model online.
Operational-space cancellation instead exposes a translational error
\textbf{double integrator with a fixed transition matrix and a
configuration-scheduled input map}, making interaction a predictive quantity on
one representation rather than a property re-derived per configuration.  We
build on it a compact offset-free interaction-error MPC for torque-controlled
manipulators: a force-domain random-walk state estimates persistent interaction
and model error, and a 30-variable convex QP maps the correction through the
current task inertia while constraining the applied joint torque.  Conditional
results establish impedance equivalence of the unconstrained passive feedback,
offset-free regulation at feasible frozen configurations, and quadratic
stabilizability of the scheduled backbone.  In a 1\,kHz MuJoCo simulation of a
7-DOF Franka FR3, the estimator cuts steady-state error under a repeated 15\,N
step from 2.77\,mm to 0.042\,mm when added to the otherwise identical 100\,Hz
MPC.  A stiffness-and-damping-calibrated impedance baseline attains 2.59\,mm but
briefly saturates and needs 3.3 times the peak positive joint power.  Adding
ideal measured-force cancellation to that baseline gives 1.39\,mm, so
constant-load rejection is not unique to MPC; the sensorless controller still
reaches 0.042\,mm in the moving task, a $65\times$ reduction without force
sensing and without the baseline's saturation or power cost.  Demonstrated in
simulation under a shared actuator budget, the contribution is an efficient
operational-space realization complementing rather than replacing broader
interaction-control architectures.
\end{abstract}

\begin{IEEEkeywords}
Physical human--robot interaction, offset-free model predictive control,
operational-space control, disturbance estimation, torque constraints.
\end{IEEEkeywords}

\section{Introduction}

Impedance and admittance control provide transparent contact behavior for
physical human--robot interaction (pHRI)~\cite{hogan1985impedance,chiaverini1999survey}.
Their intended objectives differ across applications.  In hand guiding, a
force-induced displacement may be the command itself; zero tracking error is
then undesirable.  This work instead addresses \emph{reference-holding and
recovery} tasks in which transient deflection is permitted but a robot should
return to a commanded motion under a persistent load.  With finite stiffness,
classical impedance retains the equilibrium bias
$p-p_d=K_d^{-1}F_h$.  Integral action can remove it but must be coordinated
with actuator saturation~\cite{cao2004antiwindup,cao2002antiwindup}.

Predictive interaction control already has a substantial literature. Gold
\emph{et al.} formulate model-predictive interaction control for manipulation
tasks~\cite{gold2023mpic}; Chen \emph{et al.} integrate impedance matching and
constraint optimization~\cite{chen2025unified}; Haninger \emph{et al.} learn
interaction models with Gaussian processes~\cite{haninger2023model}; and
Cao \emph{et al.} impose passivity in predictive impedance
control~\cite{cao2023passive}.  Online force estimation has also been combined
with MPC~\cite{jordana2024force}, while nonlinear manipulator MPC can run at
high rate with tailored computation~\cite{kleff2021highfrequency}.  Safe
learning-supported predictive force/motion control~\cite{matschek2023safe},
predictive variable impedance~\cite{liu2025model}, and MPC--ADRC
combinations~\cite{wu2025ensuring} address adjacent robustness and constraint
questions.  These works preclude a claim that MPC, disturbance estimation, or
interaction-error prediction is new in isolation.

Two of these differ from the present formulation in model scope and sensing.
Jordana \emph{et al.}~\cite{jordana2024force} embed direct force-sensor
feedback in a full nonlinear whole-body MPC, estimating the discrepancy
between predicted and measured contact force within that model; the present
controller instead uses no force/torque sensor and estimates an unmodelled
aggregate disturbance as a random walk in a reduced, configuration-independent
three-dimensional force channel. Gold \emph{et al.}'s MPIC~\cite{gold2023mpic}
predicts contact force explicitly through an interaction model, weighting the
prediction independently in the cost alongside position, velocity, and
control-effort terms; for a reference-holding task, that weight must be zero
to avoid trading position accuracy for compliance, at which point the force
prediction stops contributing to disturbance rejection at all. The present
controller instead treats the disturbance as unknown, and its centered input
cost achieves offset rejection with no such trade-off: $\Fmpc=-\dhat$ is
exactly cost-free at any position weight, regardless of whether the task is
framed as motion, compliance, or force control.

The opportunity addressed here is therefore not a new predictive-control or
estimation primitive in isolation, but their specific combination into an
economical interface.  Rather than optimize full joint dynamics or
impedance parameters, operational-space cancellation leaves a normalized
translational error model.  The optimization variable is a corrective task
force, the persistent disturbance is augmented in the same force domain with
a centered cost that makes offset rejection unconditional on task weighting,
and the robot-specific realization enters the complete applied-torque
constraint rather than bounding the correction alone.  The result is a small
convex QP whose state-transition block can be precomputed.

Two controlled comparisons in \S\ref{sec:experiments} and
\S\ref{sec:tauactive} deliberately separate what each mechanism is
credited for, rather than crediting the receding-horizon MPC architecture
as a whole: the disturbance estimate, not receding-horizon prediction,
drives the reported offset-rejection improvement, established against a
non-predictive fixed-gain observer-impedance baseline using the identical
estimator; and this paper's slack-relaxed constraint embedding, not
deeper prediction, drives torque-feasibility robustness once the
actuation budget is tightened enough to bind, established by relaxing the
same optimized policy's hard torque row rather than by extending the
horizon. Receding-horizon MPC is the vehicle in which both mechanisms are
realized and evaluated together; neither mechanism's own credit depends
on the horizon being predictive as opposed to single-step, and the
experiments below test that directly rather than assuming it.

The contributions are therefore threefold:

\begin{enumerate}[leftmargin=*,itemsep=2pt,topsep=2pt]
\item a \textbf{double-integrator interaction-error representation}: a
task-error-centered operational-space predictor with a constant transition
matrix and configuration-scheduled input effectiveness, making physical
interaction a predictive quantity on a shared model rather than a property
re-derived per contact or configuration;
\item force-domain offset-free augmentation on that representation, aligning
the disturbance estimate, horizon prediction, and corrective task-force
variable; and
\item an applied-torque realization, on the same representation, that
constrains the sum of feedforward, translation, orientation, and null-space
torques, accompanied by conditional analytical statements and reproducible
simulation audits.
\end{enumerate}

The novelty is thus architectural and implementational: an explicit reusable
boundary between a behavior-coordinate request and robot-specific torque
realization.  It is not a new interaction-control paradigm.  The evaluation
focuses on the causal C4--C5 estimator ablation and adds a
stiffness-and-damping-calibrated impedance, tuned anti-windup PI, and filtered force-cancellation comparison
under common joint-torque limits.  Boundary and passivity results are reported
only at the level actually exercised by the simulation.

\section{System Dynamics and Problem Formulation}

Consider an $n$-DOF torque-controlled serial manipulator; experiments
use the Franka FR3 ($n=7$)~\cite{franka2023fr3}.
The rigid-body equations of motion are:
\begin{equation}\label{eq:eom}
  M(q)\ddot{q}+C(q,\dot{q})\dot{q}+G(q)=\tau+J^\top(q)\mathcal{F}_h
\end{equation}
where $M(q)$ is the positive-definite inertia matrix;
$C(q,\dot{q})\dot{q}+G(q)$ is the Coriolis-plus-gravity vector
(available as \texttt{qfrc\_bias} in MuJoCo or via the robot model API);
$\tau$ is the commanded joint torque; and $\mathcal{F}_h\in\mathbb{R}^6$
is the human-applied wrench.
$J(q)=[J_v^\top(q);\;J_\omega^\top(q)]^\top\in\mathbb{R}^{6\times n}$
is the full geometric Jacobian, with $J_v\in\mathbb{R}^{3\times n}$ the
translational and $J_\omega\in\mathbb{R}^{3\times n}$ the rotational
sub-Jacobian~\cite{siciliano2009robotics}.
$\mathcal{F}_h$ is defined as positive when the human pushes
\emph{onto} the robot, so it enters~\eqref{eq:eom} with a positive
sign, adding to the generalized force that accelerates the joints.

The translational Jacobian satisfies $\dot{p}=J_v(q)\dot{q}$~%
\cite{siciliano2009robotics}.
The operational-space inertia~\cite{khatib1987unified} is:
\begin{equation}\label{eq:Lambda}
  \Lambda(q)=\bigl(J_vM^{-1}J_v^\top\bigr)^{-1}\in\mathbb{R}^{3\times3}
\end{equation}
$\Lambda(q)$ is symmetric positive-definite.
The dynamically-consistent
pseudoinverse~\cite{khatib1987unified,sentis2005synthesis} is
$\Jvb=M^{-1}J_v^\top\Lambda$, ensuring forces applied through
$J_v^\top$ produce no null-space acceleration.

After feedforward cancellation (Section~\ref{sec:layer1}), the human
force and joint friction appear as an aggregated Cartesian
\emph{acceleration} disturbance $d_{\text{acc}}(t)$ (units m/s$^2$ ---
kept notationally distinct throughout from its \emph{force}-equivalent
$d$, units N, introduced in Section~\ref{sec:layer2c} and used from Layer
2 onward), modelled as a random walk:
\begin{equation}\label{eq:distmodel}
  \dot{d}_{\text{acc}}=w(t),\quad w\sim\mathcal{N}(0,Q_d)
\end{equation}
capturing constant forces (sustained pushes, physiological tremor) and
slow-varying loads (unmodelled payloads, tool changes).
This decomposition is written in acceleration units; the estimator and QP of
Section~\ref{sec:layer2} operate on the equivalent \emph{force-form}
disturbance $d=-\Lambda(q)d_{\mathrm{acc}}$ (the quantity entering through $B_d(\rho_k)$ in
\eqref{eq:augstate}), the form realized in code. Only the interaction-wrench
contribution to $d$ is constant under a constant human force; the remaining
terms of $d_{\text{acc}}$ in \eqref{eq:dist} --- orientation coupling, the
Jacobian-derivative term, model error --- stay configuration- and
motion-dependent regardless, which is why $d$ is modelled as a random walk
rather than a known constant.

For the reference-holding task considered here, design a torque law $\tau$
such that for any constant bounded disturbance $d$ compatible with the
actuator constraints:
\begin{enumerate}
  \item $\|e(\infty)\|=0$, where $e=p_d-p$ (zero steady-state error);
  \item $\|e(t)\|$ is minimized over a finite prediction horizon;
  \item $q_i\in[q_{\min,i}+\epsilon,\;q_{\max,i}-\epsilon]$
        for all $i$ when the one-step safety filter is enforced.
\end{enumerate}

The optimization uses the following reduced representation.
\begin{definition}[Interaction-error predictor]\label{def:id}
The \emph{interaction-error predictor} is the augmented task-coordinate model
\[
  x_{k+1} = A\,x_k + B\,u_k + E\,d_k
\]
with $x_k$ the task error and error rate, $u_k$ the corrective task force, and
$d_k$ a force-equivalent persistent disturbance. It is a control-oriented
reduction of~\eqref{eq:eom}, not an alternative physical dynamics.
\end{definition}

Section~\ref{sec:layer2} constructs the predictor with constant
$A=A_d$, scheduled $B=B_d(\rho_k)$, and an augmented estimate of $d$. This
constant-$A$ property applies to the residual translational predictor only.
The complete torque realization remains robot dependent through
$M$, $C\dot q+G$, $J$, the full-pose null-space projector, joint limits, and
actuator constraints.

\section{Linear Interaction-Error Predictor and Finite-Horizon Realization}

\subsection{Layer 1 --- Feedforward Nonlinear Inversion}
\label{sec:layer1}

Classical Cartesian impedance control~\cite{hogan1985impedance} commands:
\begin{equation*}
  \tau = C\dot{q}+G+J_v^\top(\Lambda\ddot{p}_d+\mu+K_de+D_d\dot{e})
\end{equation*}
where $\mu=\Lambda J_vM^{-1}C\dot{q}-\Lambda\dot{J}_v\dot{q}$ collects
task-space Coriolis and centripetal terms~\cite{khatib1987unified},
embedding the full impedance law inside a single static feedback.
This treats $F_h$ as a disturbance to be passively rejected with no
predictive look-ahead and no mechanism to drive steady-state error to
zero.

The proposed Layer~1 explicitly separates these concerns.
Following the operational-space decomposition of~\cite{khatib1987unified},
the full torque command is partitioned into four channels via the
null-space projector $\Nbar=I-\bar{J}J$~\cite{sentis2005synthesis}, where
$\bar{J}=M^{-1}J^\top\Lambda_6$ is the dynamically-consistent pseudoinverse
of the full $6$-DOF pose Jacobian $J$ and
$\Lambda_6=(JM^{-1}J^\top)^{-1}\in\mathbb{R}^{6\times6}$ is the full-pose
operational-space inertia (both the translational MPC task and the
orientation channel are actively controlled, so the projector annihilates
the full pose task, not only its translational part):
\begin{equation}\label{eq:tau}
  \tau = \tau_{\text{ff}}+J_v^\top\Fmpc
        +J_\omega^\top F_{\text{orient}}+\Nbar^\top\tau_{\text{null}}
\end{equation}

The feedforward term cancels all known nonlinearities using the
computed-torque method~\cite{siciliano2009robotics}:
\begin{equation}\label{eq:tff}
  \tau_{\text{ff}} = \underbrace{C(q,\dot{q})\dot{q}+G(q)}_{\texttt{qfrc\_bias}}
                   + J_v^\top\Lambda(q)\,\ddot{p}_d
\end{equation}

Writing the Layer-2 task channel as $\tau_{\text{mpc}}\equiv J_v^\top\Fmpc$,
define the known non-MPC torque offset
\[
  \tau_{\text{base}}=
  \tau_{\text{ff}}+J_\omega^\top F_{\text{orient}}+\Nbar^\top\tau_{\text{null}}.
\]
The final applied command is the joint torque
$\tau = \tau_{\text{base}}+\tau_{\text{mpc}}$, which the hardware saturates at
the per-joint actuator limit $|\tau_i|\leq\tau_{\max,i}$; Layer~2 constrains
this applied receding-horizon torque, while the hardware interface keeps
clipping and rate limiting as final runtime guards.

\begin{proposition}\label{prop:doubleint}
After applying \eqref{eq:tff}, the residual error dynamics are represented by
the linear double integrator
\begin{equation}\label{eq:errordyn}
  \ddot{e} = -\Laminv\,\Fmpc + d_{\text{acc}}(t)
\end{equation}
with $e=p_d-p$; every term left over from the feedforward cancellation ---
not just the human force --- is collected into the \emph{acceleration}-form
disturbance $d_{\text{acc}}(t)$, defined below (not to be confused with its
force-equivalent $d=-\Lambda(q)d_{\text{acc}}$, Section~\ref{sec:layer2c},
which is the quantity actually estimated and cancelled from Layer 2
onward):
\begin{align}\label{eq:dist}
  d_{\text{acc}}(t) &= \underbrace{-\Laminv\,\bar{J}_v^\top J^\top\mathcal{F}_h}_{%
           \text{projected pHRI force}}
          \;\underbrace{-\,J_vM^{-1}J_\omega^\top F_{\text{orient}}}_{%
           \text{orientation coupling}}\notag\\
        &\quad\underbrace{-\,\dot{J}_v(q,\dot{q})\dot{q}}_{%
           \text{Jacobian derivative}}
          \;+\;\underbrace{\epsilon_m(t)}_{\text{model error}}
\end{align}
where $\bar{J}_v^\top=\Lambda J_vM^{-1}\in\mathbb{R}^{3\times n}$ and
$J^\top\in\mathbb{R}^{n\times 6}$ is the full geometric Jacobian transpose,
so $\bar{J}_v^\top J^\top\in\mathbb{R}^{3\times 6}$ acts on the $6$-D wrench
$\mathcal{F}_h$ (equivalently the term is $-J_vM^{-1}J^\top\mathcal{F}_h$).
\end{proposition}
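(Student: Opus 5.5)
The plan is a direct computation. I will solve the rigid-body model \eqref{eq:eom} for $\ddot q$ under the full command \eqref{eq:tau}, map $\ddot q$ to task acceleration through $\ddot p = J_v\ddot q+\dot J_v\dot q$, and subtract from $\ddot p_d$. Terms the nominal model does not capture (joint friction, mismatch between the model's $M$, $C\dot q+G$ and the true plant, actuator effects) are placed in $\epsilon_m(t)$ by definition. The derivation therefore assumes exact nominal quantities and lets $\epsilon_m$ absorb the residual.

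First, substituting \eqref{eq:tau} and \eqref{eq:tff} into \eqref{eq:eom}, the bias term $C\dot q+G$ cancels exactly. This leaves
\[
\ddot q = M^{-1}\bigl(J_v^\top\Lambda\ddot p_d + J_v^\top\Fmpc + J_\omega^\top F_{\text{orient}} + \Nbar^\top\tau_{\text{null}} + J^\top\mathcal{F}_h\bigr).
\]
Next, I premultiply by $J_v$ and apply the definition \eqref{eq:Lambda}, $J_vM^{-1}J_v^\top=\Laminv$. This gives $J_vM^{-1}J_v^\top\Lambda\ddot p_d=\ddot p_d$ and $J_vM^{-1}J_v^\top\Fmpc=\Laminv\Fmpc$. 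The orientation term $J_vM^{-1}J_\omega^\top F_{\text{orient}}$ and the wrench term $J_vM^{-1}J^\top\mathcal{F}_h$ remain as they are. For the wrench term, I rewrite it as $\Laminv\,\Lambda J_vM^{-1}J^\top\mathcal{F}_h=\Laminv\bar J_v^\top J^\top\mathcal{F}_h$, which matches the stated form.

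The one step that needs an actual argument is showing that the null-space channel contributes nothing to task acceleration. I will prove $JM^{-1}\Nbar^\top=0$, which implies the translational rows $J_vM^{-1}\Nbar^\top$ vanish as well. Expanding,
\[
\Nbar^\top = I - J^\top\bar J^\top = I - J^\top\Lambda_6 JM^{-1},
\]
using the symmetry of $M$ and $\Lambda_6$. Hence
\[
JM^{-1}\Nbar^\top = JM^{-1} - (JM^{-1}J^\top)\Lambda_6 JM^{-1} = JM^{-1}-JM^{-1}=0,
\]
because $\Lambda_6=(JM^{-1}J^\top)^{-1}$. This step requires the full-pose $J$ to have full row rank, i.e. a nonsingular configuration, and I will state that assumption explicitly. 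It also explains why the projector must be built from the full 6-DOF $J$ rather than from $J_v$ alone.

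Finally, I add $\dot J_v\dot q$ to obtain $\ddot p$ and form $\ddot e=\ddot p_d-\ddot p$. The $\ddot p_d$ terms cancel, and what remains is
\[
\ddot e=-\Laminv\Fmpc-\Laminv\bar J_v^\top J^\top\mathcal{F}_h-J_vM^{-1}J_\omega^\top F_{\text{orient}}-\dot J_v\dot q+\epsilon_m .
\]
This is \eqref{eq:errordyn} with $d_{\text{acc}}$ as in \eqref{eq:dist}. Once $\epsilon_m$ is defined as above, the remaining bookkeeping is just tracking signs, in particular that $\mathcal{F}_h$ enters \eqref{eq:eom} with a positive sign and that $e=p_d-p$, which flips every plant-side term.
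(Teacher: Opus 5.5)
Your derivation is correct and is essentially the direct substitution the paper relies on: the paper gives no separate proof and simply asserts, via the dynamically consistent projector, that the null-space torque produces no task acceleration, so your identity $JM^{-1}\bar N^\top=0$ under full row rank of $J$ fills in that cited step. One aside is inaccurate, because a projector built from $J_v$ alone would also satisfy $J_vM^{-1}(I-\bar J_vJ_v)^\top=0$; the full-pose projector is therefore needed to keep the null-space torque out of the orientation channel, not to obtain this translational result.
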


The four terms: (i)~the \emph{projected pHRI force} — for a purely
translational $\mathcal{F}_h=[F_h^\top,0]^\top$ this reduces to
$-\Lambda^{-1}F_h$ in acceleration form (because $e=p_d-p$, any positive
acceleration of $p$ induced by the human force appears with a negative sign
in $\ddot e$);
(ii)~the \emph{orientation coupling} — residual translational
acceleration from the simple additive orientation PD. This mobility
cross-term is generally structural, not a numerical artifact, and is,
in its force-equivalent form, absorbed by $\dhat$;
(iii)~the \emph{Jacobian derivative} $-\dot{J}_v\dot{q}$, small at
typical speeds; (iv)~the \emph{model error} $\epsilon_m(t)$ from
imperfect gravity/Coriolis cancellation.

\begin{remark}[Exactness of the reduction]
Under \emph{ideal} cancellation the residual plant is exactly the double
integrator~\eqref{eq:errordyn}; in practice the terms
(ii)--(iv) are not identically zero. Rather than being neglected they are
\emph{modelled} as the acceleration disturbance $d_{\text{acc}}(t)$, whose
force-equivalent $d=-\Lambda(q)d_{\text{acc}}$ the augmented Kalman state
$\dhat$ estimates and the MPC rejects (Theorem~\ref{thm:zss}). The reduction is
thus a \emph{cancellation-plus-disturbance} model, not an assumption of perfect
inversion. Only the residual translational predictor has constant $A_d$ and
scheduled $B_d(\rho_k)$; the applied-torque realization retains the full
robot-model dependence listed after Definition~\ref{def:id}.
\end{remark}

\subsection{Layer 2 --- Discrete LPV Model and QP}\label{sec:layer2}

With error state $x_e=[e^\top,\dot{e}^\top]^\top\in\mathbb{R}^6$ and
scheduling variable $\rho_k=q_k$, exact zero-order-hold (ZOH) discretization gives:
\begin{equation}\label{eq:lpv}
  x_{e,k+1} = \underbrace{\begin{bmatrix}I_3&\Delta t I_3\\0&I_3\end{bmatrix}}_{A_d\;\text{(constant)}}
              x_{e,k}
            + \underbrace{\begin{bmatrix}-\tfrac{\Delta t^2}{2}\Laminv\\-\Laminv\Delta t\end{bmatrix}}_{B_d(\rho_k)}
              \Fmpck{k}
\end{equation}

Equation~\eqref{eq:lpv}, together with the disturbance augmentation
\eqref{eq:augstate} below, instantiates Definition~\ref{def:id}: the state
$x_e$ is the task error, the input $\Fmpc$ is the corrective task force, and
$\hat d$ is the estimated force-equivalent disturbance. Identifying $A=A_d$
and $B=B_d(\rho_k)$ gives the model used by the QP; constraints are imposed
through a separate robot-specific torque map rather than being intrinsic
properties of the reduced dynamics.

\textbf{Structural result.}
The continuous-time matrix $A_c=\begin{bmatrix}0&I_3\\0&0\end{bmatrix}$
is nilpotent ($A_c^2=0$), so the matrix exponential terminates exactly at
first order and $A_d=e^{A_c\Delta t}=I+A_c\Delta t$ is
configuration-independent and exact, with no discretization
approximation; the exact ZOH input integral $B_d(\rho_k)$ is likewise
closed-form. Each frozen pair $(A_d,B_d(\rho))$ is pointwise controllable,
and the disturbance state augmented in Sec.~\ref{sec:layer2c} is
detectable, at every configuration away from kinematic singularities
(proof: supplementary material, Sec.~S1, Remark~S1).

\textit{Implementation note (regularization).}
In code $\Lambda^{-1}(q)=J_vM^{-1}J_v^\top+\sigma I$ with $\sigma=10^{-6}$; this
Tikhonov term is negligible in the workspace interior and only mildly damps the
estimate near singularities, outside the operating region.

The receding-horizon QP with $N=10$ steps, $3N=30$ decision variables
$U=[\Fmpck{0};\ldots;\Fmpck{N-1}]$, is:
\begin{subequations}\label{eq:QP}
\begin{align}
  \min_U\;& \tfrac{1}{2}U^\top HU+h^\top U \label{eq:QP_cost}\\
  \text{s.t.}\;&
  -\tau_{\max}\leq
  \tau_{\text{base}}(i)+J_v^\top(q_i)\Fmpck{i}
  \leq\tau_{\max}, \quad i=0,\ldots,N_c\!-\!1, \label{eq:QP_torque}\\
  & -F_{\max}\leq\Fmpck{k}\leq F_{\max},
  \quad k=0,\ldots,N-1. \label{eq:QP_force}
\end{align}
\end{subequations}
where $F_{\max}$ is the task-space (Cartesian) corrective-force bound on the
QP's decision variable (distinct from the joint-torque limit $\tau_{\max}$),
and $\tau_{\text{base}}(i)=\tau_{\text{ff}}(i)+J_\omega^\top F_{\text{orient}}+\bar{N}^\top\tau_{\text{null}}$
is the known non-MPC torque offset (Sec.~\ref{sec:layer1}). Step~0 uses the
exact current-configuration values $\tau_{\text{ff}}(0),J_v(q_0)$; for
$i\geq1$, $\tau_{\text{ff}}(i)$ and $J_v(q_i)$ are evaluated at the
reference-scheduled configuration $q_i=q_d(t+i\Delta t)$ --- the best
available prediction of the future state, since the realized trajectory is
not yet known --- while orientation and null-space terms stay frozen at
their step-0 value. $N_c\in\{1,\ldots,N\}$ sets how many leading rows are
enforced; the reported experiments use $N_c=1$ (first-move only), with the
reference-scheduled extension to $N_c=N$ implemented but not separately
benchmarked here. $H=\Gamma^\top\bar{Q}\Gamma+\bar{R}$, stage cost
$Q=\text{blkdiag}(K_d,D_d)$, terminal cost $Q_f=\gamma Q$ ($\gamma=5$),
$\bar{Q}=\text{blkdiag}(Q,\ldots,Q,Q_f)$,
$\bar{R}=\text{blkdiag}(R,\ldots,R)$.
Let $d_N=\mathbf 1_N\otimes\dhat$; the linear term is
$h=\Gamma^\top\bar{Q}x_{\text{free}}+\bar R d_N$, corresponding to the
offset-free input-centered incremental effort penalty
$\|U+d_N\|_{\bar R}^2$ rather than absolute physical effort.
Equivalently, before eliminating the predicted state, the objective is
\begin{equation}\label{eq:centeredcost}
 J(U)=\tfrac12X^\top\bar QX
 +\tfrac12(U+d_N)^\top\bar R(U+d_N),
\end{equation}
whereas the physical force and torque constraints remain imposed on $U$.
The free response $x_{\text{free}}=\Phi x_e+D_{\text{bar}}\dhat$,
where $D_{\text{bar}}\in\mathbb{R}^{6N\times3}$ has $j$-th block
$\sum_{l=0}^{j-1}A_d^l B_d(\rho_k)$; equivalently,
$D_{\text{bar}}=\Gamma(\mathbf 1_N\otimes I_3)$.
Only $\Fmpck{0}$ is applied (receding-horizon principle).
\emph{Constraint interpretation (feasible final control).} The final
applied command is the joint torque
$\tau=\tau_{\text{base}}+J_v^\top\Fmpck{0}$, where
$\tau_{\text{base}}$ contains feedforward, orientation, and null-space terms,
which the hardware saturates at the per-joint actuator limit $\tau_{\max}$
(on the FR3, $\tau_{\max}=[87,87,87,87,12,12,12]$\,Nm). Constraining only the
correction, $\|\Fmpc\|_\infty\leq F_{\max}$, would \emph{not} keep $\tau$
within $\tau_{\max}$: it ignores the feedforward offset $\tau_{\text{ff}}$
(both $C\dot{q}+G$ and the inertial term $J_v^\top\Lambda\ddot{p}_d$, which on
the FR3 dominate the budget), and it bounds a Cartesian force whereas the map
$J_v^\top(q)$ to joint torque is configuration-dependent; any deficit would
be absorbed by a downstream torque clip, i.e.\ \emph{saturation}, discarding
the optimized command. We therefore constrain the first applied
receding-horizon torque directly, as in~\eqref{eq:QP_torque}. Since
$\tau_{\text{base}}(0)$ and $J_v(q_0)$ are known at solve time, the constraint
is affine in $\Fmpck{0}$ --- a convex per-joint polytope solved by OSQP at the
same cost class as a box. As only the first move is applied, the nominal
applied torque satisfies $|\tau_i|\leq\tau_{\max,i}$ whenever the QP is
feasible. Runtime clipping and rate limiting remain as final safety guards;
horizon-wide torque rows can be added but are not required for feasibility of
the applied receding-horizon action. If an over-aggressive reference makes
$\tau_{\text{base}}(0)$ alone exceed $\tau_{\max}$, the released
implementation returns a zero correction on infeasibility (correction-
authority ablation, supplementary material Sec.~S9), which is simple but
not minimum-violation. The $i=0$ ($N_c=1$) row can instead be softened with a
penalized slack $s\geq0$ so the QP stays feasible and returns a
minimum-violation-like command, analogous to the joint-limit CBF
relaxation of Sec.~\ref{sec:cbf}; \S\ref{sec:tauactive} implements and
stress-tests exactly this relaxation under a deliberately tightened
$\tau_{\max}$: it recovers the hard constraint's lost tracking quality at
a practically relevant tightening while never triggering the plant's
physical clip, though it does draw on genuinely nonzero slack relative to
the tightened test budget itself, a distinction \S\ref{sec:tauactive}
makes precise.

The following conditional identification relates the unconstrained feedback
component to classical impedance; it is not a claim that predictive control
subsumes every impedance implementation.

\begin{theorem}[Impedance Equivalence]\label{thm:equiv}
Fix a configuration and assume the input constraints are inactive and
the disturbance estimate is zero. Then the first MPC move is a linear
state feedback
$\Fmpc=K_e e+K_{\dot e}\dot e$. If the realized gain blocks are symmetric
positive definite, or more generally have symmetric parts that define passive
stiffness and damping, set
$K_{\text{eff}}=\operatorname{sym}(K_e)\succ0$ and
$D_{\text{eff}}=\operatorname{sym}(K_{\dot e})\succ0$; the conservative part
of the closed loop is the classical task-space impedance:
\begin{equation}\label{eq:impedance}
  \Lambda(q)\ddot{e}+D_{\text{eff}}\dot{e}+K_{\text{eff}}e = -F_h,
\end{equation}
where $F_h$ is the constant force-equivalent disturbance applied to the
end effector.
\end{theorem}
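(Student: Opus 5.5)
The plan has two steps: first show that the unconstrained first move is linear in the state, then substitute that feedback into the reduced error dynamics of Proposition~\ref{prop:doubleint}.

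\textbf{Step 1: the first move is a linear feedback.} Freeze the configuration, so that $\rho_k=q$ is fixed and $B_d(\rho)$, $\Gamma$, $H$ and $\Phi$ are constant matrices. With the constraints \eqref{eq:QP_torque}--\eqref{eq:QP_force} inactive, the QP \eqref{eq:QP} reduces to unconstrained minimization of a strictly convex quadratic. Here $H=\Gamma^\top\bar Q\Gamma+\bar R\succ0$ because $\bar R\succ0$. The minimizer is therefore $U^\star=-H^{-1}h$. Setting $\dhat=0$ gives $d_N=0$ and $x_{\text{free}}=\Phi x_e$, so
\[
h=\Gamma^\top\bar Q\Phi x_e .
\]
Extracting the first block row with the selector $S_0=[I_3,0,\dots,0]$ gives
\[
\Fmpck{0}=-S_0H^{-1}\Gamma^\top\bar Q\Phi\,x_e\equiv[K_e\;K_{\dot e}]\,x_e .
\]
This is a linear state feedback with constant $3\times3$ blocks, which establishes the first claim.

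\textbf{Step 2: substitute into the error dynamics.} Use \eqref{eq:errordyn} with the disturbance restricted to the projected pHRI term. By item~(i) after Proposition~\ref{prop:doubleint}, a purely translational push gives $d_{\text{acc}}=-\Lambda^{-1}F_h$. Multiplying $\ddot e=-\Lambda^{-1}\Fmpc+d_{\text{acc}}$ by $\Lambda(q)$ yields
\[
\Lambda\ddot e+K_{\dot e}\dot e+K_e e=-F_h .
\]
Terms (ii)--(iv) are either absorbed into the force-equivalent disturbance $F_h$, which is how the statement defines $F_h$, or vanish under ideal cancellation.

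\textbf{Step 3: isolate the conservative part.} Split each gain block as $K=\operatorname{sym}(K)+\operatorname{skew}(K)$. Along any trajectory the skew parts do no work: $e^\top\operatorname{skew}(K_e)e=0$ and $\dot e^\top\operatorname{skew}(K_{\dot e})\dot e=0$. Take the storage function
\[
V=\tfrac12\dot e^\top\Lambda\dot e+\tfrac12 e^\top K_{\text{eff}}e .
\]
Its derivative at the frozen configuration is
\[
\dot V=-\dot e^\top D_{\text{eff}}\dot e-\dot e^\top F_h-\dot e^\top\operatorname{skew}(K_e)e .
\]
The first two terms are exactly the energy balance of \eqref{eq:impedance}. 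The symmetric parts therefore define the conservative, passive impedance, and the skew parts form a non-conservative gyroscopic/circulatory remainder. In the symmetric positive-definite case the remainder vanishes and the closed loop is \eqref{eq:impedance} exactly.

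\textbf{Main obstacle.} The hard part is making precise what ``conservative part'' means, since nothing forces $K_e$ to be symmetric for a general $\Lambda$, $R$ and $Q$. I would handle this definitionally through the sym/skew decomposition above rather than by proving symmetry. I would also check a sign convention in Step~1: for the blocks to come out positive, the minus sign in $B_d$ must combine with $-H^{-1}\Gamma^\top\bar Q\Phi$ to give a positive stiffness. I would verify this on the scalar case $\Lambda=mI$, $N=1$, where the gains are explicit and positive.
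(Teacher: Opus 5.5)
Your proposal is correct and follows the paper's proof essentially step for step: the unconstrained minimizer $U^\star=-H^{-1}\Gamma^\top\bar Q\Phi x_e$, extraction of the first block as $[K_e\;K_{\dot e}]x_e$, and substitution into $\Lambda\ddot e=-\Fmpc-F_h$ with $F_h=-\Lambda d_{\text{acc}}$. Your Step~3 storage-function computation puts into equations the paper's verbal remark that the skew part of $K_{\dot e}$ is workless while the skew part of $K_e$ is circulatory. The scalar sign check you plan is unnecessary, because positivity of the symmetric parts is a hypothesis of the theorem.
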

\begin{proof}
With inactive constraints and $\dhat=0$, \eqref{eq:QP} reduces to an
unconstrained strictly convex quadratic because
$H=\Gamma^\top\bar Q\Gamma+\bar R\succ0$ when $\bar R\succ0$. Its
unique minimizer is
\begin{equation*}
  U^\star=-H^{-1}\Gamma^\top\bar Q\Phi x_e .
\end{equation*}
Let $S=[I_3\;0\;\cdots\;0]$ select the first force block. The applied
force is therefore
\begin{equation*}
  \Fmpc=S U^\star =
  \begin{bmatrix}K_e&K_{\dot e}\end{bmatrix}
  \begin{bmatrix}e\\\dot e\end{bmatrix},
\end{equation*}
for fixed matrices $K_e,K_{\dot e}$ determined by
$(A_d,B_d,\bar Q,\bar R,Q_f,N)$. Under the stated symmetry/sign condition,
the symmetric parts define $K_{\text{eff}}$ and $D_{\text{eff}}$. The exact
linear closed loop is obtained with the full gains $K_e,K_{\dot e}$; only
their symmetric parts admit the conservative spring--damper interpretation.
The skew part of $K_{\dot e}$ is workless gyroscopic damping coupling, whereas
the skew part of $K_e$ is a circulatory position force and is not passive in
general. These skew terms vanish in the common isotropic/commuting case and
are otherwise treated as part of the stabilizing LQR feedback rather than as
classical impedance parameters. The residual plant
from Proposition~\ref{prop:doubleint} is
$\ddot e=-\Lambda^{-1}\Fmpc+d_{\text{acc}}$. For a translational
constant human force represented in force form by $F_h=-\Lambda
d_{\text{acc}}$, multiplication by $\Lambda$ gives
$\Lambda\ddot e=-\Fmpc-F_h$. Substituting the linear feedback yields
\eqref{eq:impedance}.
\end{proof}
Thus the unconstrained MPC realizes a configuration-dependent linear
impedance whenever the gain blocks have the passive stiffness/damping
sign structure; constraints and disturbance augmentation are the
mechanisms that extend this linear impedance behavior.

\begin{remark}[Optimality vs.\ passivity: motivation for the energy tank]
Because the predictive gain is the LQR/Riccati image of $(Q,R)$ rather than a
passive design, the skew part of $K_e$ (Theorem~\ref{thm:equiv}) is a
non-passive circulatory force. For the robot alone this is benign (the DARE
closed loop is asymptotically stable), but coupled to a human---itself a
dynamical system---a non-passive controller need not preserve stability of the
\emph{coupled} loop. This motivates the energy-tank layer of
\S\ref{sec:passivity}: rather than restricting $(Q,R)$ to passive gains and
sacrificing performance, we keep the optimal law and restore a sampled
energy-budget certificate on the scaled translational predictive-force
channel at runtime (Proposition~\ref{prop:tank}), bounding the positive work
that channel can deliver. This is a channel-level certificate, not a
full-port passivity proof (see the scope remark in
\S\ref{sec:passivity}), so it does not by itself guarantee coupled
stability against an arbitrary passive environment; it removes one
identified non-passive source while leaving feedforward, orientation, and
null-space contributions to the port outside its accounting.
\end{remark}

\begin{remark}[Prescribed vs.\ realized gains]\label{rem:gains}
The cost weights $Q=\mathrm{blkdiag}(K_d,D_d)$, $R$ are design
\emph{penalties}; the realized impedance
$(K_{\text{eff}},D_{\text{eff}})$ is their LQR image ---
a nonlinear (Riccati) map of $(Q,R)$ --- so in general
$(K_{\text{eff}},D_{\text{eff}})\neq(K_d,D_d)$. The weights \emph{shape}
the realized impedance: the high-bandwidth tracking reported below
corresponds to the stiff, well-damped $K_{\text{eff}}$ the chosen
$(Q,R)$ induce. If a specific $(K_d,D_d)$ must instead be matched
exactly, the impedance gain can be prescribed directly
($\Fmpc=K_de+D_d\dot{e}$), rendering the equivalence exact for those
gains at the cost of predictive look-ahead. We adopt the LQR form, for
which Theorem~\ref{thm:equiv} already establishes the analytical
connection between the unconstrained MPC and a realized classical
impedance.
\end{remark}

\begin{corollary}[Existence of a Gain-Scheduled Infinite-Horizon
Implementation]\label{cor:1khz}
The infinite-horizon MPC admits a pure state-feedback realization
$\Fmpck{k}=-K_\infty(\rho_k)x_{e,k}$. By the DARE detectability result
established in the supplementary material (Sec.~S1, Remark~S1), the
stabilizing DARE solution $P_\infty(\rho)$ exists for each fixed $\rho$
under the stated detectability assumptions, and $K_\infty(\rho_k)$ depends
on $\rho_k$ only through $\Laminv$ --- a symmetric $3\times3$ SPD matrix,
so the scheduling space is the 6-D manifold of its independent entries.
Since the stabilizing DARE solution is continuous in its coefficients,
$K_\infty(\cdot)$ is continuous over this compact domain.
\end{corollary}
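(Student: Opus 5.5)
The approach is to treat the corollary as four separate claims and discharge them in order. First, the infinite-horizon unconstrained problem has a static linear state-feedback solution. Second, the stabilizing DARE solution exists at each frozen scheduling value. Third, the gain depends on $\rho$ only through $\Laminv$. Fourth, the resulting gain map is continuous on a compact set. Throughout I work with the frozen pair $(A_d,B_d(\rho))$ of \eqref{eq:lpv} and take $\dhat=0$, as in Theorem~\ref{thm:equiv}; the offset-free centering of \eqref{eq:centeredcost} only shifts the input and does not change the feedback gain.

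\textbf{Step 1: static feedback.} For fixed $\rho$, I pass to $N\to\infty$ in \eqref{eq:QP} with the constraints inactive. The cost is then $\sum_k x_{e,k}^\top Q x_{e,k}+u_k^\top R u_k$ with $R\succ0$. Pointwise controllability of $(A_d,B_d(\rho))$ away from singularities is established in the supplementary material (Sec.~S1). I need detectability of $(A_d,Q^{1/2})$, which follows from $Q=\mathrm{blkdiag}(K_d,D_d)\succ0$, and I cite Remark~S1 for it. Standard discrete LQR theory then gives a unique stabilizing solution $P_\infty(\rho)\succeq0$ of the DARE. The optimal law is $u_k=-K_\infty(\rho)x_{e,k}$ with
\[
K_\infty=(R+B_d^\top P_\infty B_d)^{-1}B_d^\top P_\infty A_d .
\]
Under frozen scheduling, applying this law at each $k$ with $\rho=\rho_k$ is exactly the receding-horizon first move of the infinite-horizon problem, which gives the realization $\Fmpck{k}=-K_\infty(\rho_k)x_{e,k}$.

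\textbf{Step 2: dependence only through $\Laminv$.} I write
\[
B_d(\rho)=\beta\,\Laminv,\qquad \beta=\bigl[-\tfrac{\Delta t^2}{2}I_3;\;-\Delta t\, I_3\bigr]
\]
(as a $6\times3$ block), with $A_d$, $Q$ and $R$ constant. The DARE coefficients are therefore functions of $X=\Laminv$ alone, and hence so are $P_\infty$ and $K_\infty$. $X$ ranges over symmetric positive-definite $3\times3$ matrices, which form a 6-dimensional set parametrized by their independent entries.

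\textbf{Step 3: compactness and continuity.} This is where care is needed. The set of SPD matrices is open, not compact. I restrict to the operating region: configurations in the compact joint box $[q_{\min}+\epsilon,q_{\max}-\epsilon]$ that stay a margin away from singularities. On that region $\sigma_{\min}(J_v)\geq\underline\sigma>0$ and $M$ has bounded eigenvalues, so $\Laminv=J_vM^{-1}J_v^\top$ (plus $\sigma I$) is a continuous image of a compact set. Its image therefore lies in a compact subset $\mathcal K$ on which $\underline\lambda I\preceq X\preceq\bar\lambda I$.

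On $\mathcal K$, $(A_d,\beta X)$ is controllable, because $X$ is invertible and the pair $(A_d,\beta)$ is controllable. By the known continuity of the stabilizing DARE solution in $(A,B,Q,R)$ on the stabilizable/detectable set, the map $X\mapsto P_\infty(X)$ is continuous on $\mathcal K$. I would cite the implicit function theorem here: the DARE linearization at the stabilizing solution is a Stein operator with $A_{cl}$ Schur, which is invertible. $K_\infty$ is then a continuous rational function of $(X,P_\infty)$, since $R+B^\top PB\succ0$. Continuity on the compact $\mathcal K$ also gives uniform continuity and bounded gains.

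I expect the main obstacle to be making "this compact domain" precise, meaning a uniform singularity margin. The corollary as stated leaves the domain implicit, so I would state that margin explicitly as a hypothesis.
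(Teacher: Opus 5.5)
Your proposal is correct and follows the same route as the paper, whose justification is the inline sketch in the corollary itself: Remark~S1's controllability and detectability give a stabilizing DARE solution at each frozen $\rho$, $B_d(\rho)$ depends on $\rho$ only through $\Laminv$, and continuity of the stabilizing solution in its coefficients gives continuity of $K_\infty$. Your additions fill in steps the paper only asserts without changing the argument: you derive detectability directly from $Q\succ0$, justify continuity through the invertible Stein linearization $\Delta\mapsto A_{cl}^\top\Delta A_{cl}-\Delta$, and make the compact domain explicit via a singularity margin, which is needed only for uniform continuity and bounded gains, since your implicit-function argument already gives continuity on the whole open SPD cone.
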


\begin{remark}[Practical 1\,kHz implementation]
Continuity of $K_\infty$ over a compact 6-D domain makes an
offline-precomputed lookup attractive: the gain can be tabulated on a
grid of $\Laminv$ values and interpolated, reducing the online cost to a
single $3\times6$ matrix--vector product. We present this as a deployment
\emph{option}, not a validated result --- grid density, interpolation
error, and the resulting closed-loop quality are implementation choices
we do not characterize here. The experiments instead run the
finite-horizon QP \eqref{eq:QP} at 100\,Hz with a 1\,kHz inner loop,
which also enforces the first applied torque constraint~\eqref{eq:QP_torque} and exploits
prediction look-ahead; a hardware demonstration of the 1\,kHz law is future work.
\end{remark}

\subsection{Kalman Disturbance Augmentation for Offset-Free Tracking}\label{sec:layer2c}

For the estimator we represent the disturbance in \emph{force form}: the
augmented state $d\in\mathbb{R}^3$ (units N) is the force-equivalent of the
task-acceleration disturbance of the residual dynamics
$\ddot{e}=-\Laminv\Fmpc+d_{\text{acc}}$, related by $d=-\Lambda(q)\,d_{\text{acc}}$.
This form enters the discrete model through the \emph{same} input matrix
$B_d(\rho_k)$ as the control $\Fmpc$, and---unlike the acceleration form
$d_{\text{acc}}=-\Lambda^{-1}(q)d$, which varies with configuration as
$\Lambda(q)$ changes---its dominant, interaction-wrench component is constant
for a constant human force, independent of $\Lambda(q)$; the remaining,
smaller contributions to $d$ stay configuration- and motion-dependent, and
the random-walk model \eqref{eq:augstate} accommodates that residual
variation rather than assuming $d(t)$ is exactly constant, consistent with
the constant-$d_\infty$ hypothesis Theorem~\ref{thm:zss} states for the
estimator's converged target. It is the quantity
the implementation stores as $\dhat$ and cancels via $\Fmpc=-\dhat$ at steady
state. Augment the state with this integrating disturbance, modelled as a
random walk $d_{k+1}=d_k+w_k$:
\begin{equation}\label{eq:augstate}
\begin{aligned}
  \begin{bmatrix}x_{e,k+1}\\d_{k+1}\end{bmatrix}
  &= \underbrace{\begin{bmatrix}A_d&B_d(\rho_k)\\0&I_3\end{bmatrix}}_{A_{\text{aug}}}
    \begin{bmatrix}x_{e,k}\\d_k\end{bmatrix}\\[2pt]
  &\quad + \begin{bmatrix}B_d(\rho_k)\\0\end{bmatrix}\Fmpck{k}
  + \begin{bmatrix}0\\w_k\end{bmatrix},
\end{aligned}
\end{equation}
with $w\sim\mathcal{N}(0,Q_w)$.
The integrating block (eigenvalues at $1$) delivers offset-free
rejection of constant disturbances by the internal-model principle,
while the process noise $w$ lets the filter track slowly-varying
disturbances. A steady-state Kalman filter observing $[e;\dot{e}]$
produces the estimate $\dhat$ of $d$ via the innovation; it is not
frozen but converges to the true disturbance. Both $e$ and $\dot e$ are
measured directly (position from forward kinematics, velocity from
$J_v\dot q$), so $C=[I_6\;0]$ and the filter is $9$-dimensional --- the
$6$ error states and the $3$ integrating disturbance states, with no
extra velocity-estimation states. The estimate $\dhat$
enters the free response, pre-compensating the QP before the disturbance
accumulates in the error state.

\textit{Implementation note (process-noise scaling).}
For a disturbance-tracking bandwidth invariant to the QP rate, the random-walk
covariance $Q_w$ must scale with the sample time, $Q_w=Q_d\,\Delta t$ (the
exact discretization of the continuous random walk \eqref{eq:distmodel}); a
single raw discrete constant held fixed across rates implicitly retunes
estimator aggressiveness. The implementation scales $Q_w$ by the QP period
between the 100\,Hz and 500\,Hz configurations (Table~\ref{tab:params}), so
the continuous-time-equivalent process-noise density is held constant across
the rate comparison; C6/C7 use the correspondingly smaller $Q_w=2I_3$.

\begin{theorem}[Offset-Free Steady-State Tracking]\label{thm:zss}
Consider the augmented model \eqref{eq:augstate} at a fixed
configuration, with constant disturbance $d_k=d_\infty$. Assume:
(i) the augmented estimator is detectable and its estimation error is
asymptotically stable, so $\hat d_{k}\to d_\infty$; (ii) the unconstrained
centred-input finite-horizon regulator is asymptotically stable; and
(iii) the force and torque constraints are eventually inactive along the
considered trajectory. Then the
steady-state tracking error is zero:
$\lim_{k\to\infty}e(k)=0$. If $d_k\to d_\infty$ asymptotically, the
same conclusion holds provided the estimator remains asymptotically
tracking.
\end{theorem}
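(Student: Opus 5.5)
The argument follows the standard offset-free MPC template: write the applied input as a centred linear feedback, then study the closed loop formed by the plant, the estimator, and the regulator.

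\textbf{Step 1: the unconstrained centred law.} By assumption (iii), after some finite time the QP \eqref{eq:QP} is unconstrained. With the centred cost \eqref{eq:centeredcost}, the substitution $V=U+d_N$ turns the problem into a standard finite-horizon regulator in $V$. The reason is that $\Gamma d_N=D_{\text{bar}}\hat d$ exactly cancels the disturbance part of the free response. The prediction is
\[
X=\Phi x_e+D_{\text{bar}}\hat d+\Gamma U=\Phi x_e+\Gamma(U+d_N),
\]
since $B_d$ multiplies $F+d$ in \eqref{eq:augstate}. The minimizer is therefore $U^\star+d_N=-H^{-1}\Gamma^\top\bar Q\Phi x_e$, and the first move is
\[
F_{\text{mpc},k}=-K x_{e,k}-\hat d_k ,
\]
where $K$ is the same gain as in Theorem~\ref{thm:equiv}.

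\textbf{Step 2: closed loop in error coordinates.} Substituting this law into the true plant $x_{e,k+1}=A_dx_{e,k}+B_d(F_k+d_\infty)$ gives
\[
x_{e,k+1}=(A_d-B_dK)x_{e,k}+B_d(d_\infty-\hat d_k).
\]
The offset enters only through the estimation error $\tilde d_k=d_\infty-\hat d_k$. This is where the centred input matters: the law imposes no requirement on the position weight, because $F=-\hat d$ is cost-free.

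\textbf{Step 3: cascade argument.} Assumption (ii) gives $\rho(A_d-B_dK)<1$. Assumption (i) gives $\tilde d_k\to0$, and the full estimation error converges to zero, typically geometrically for a steady-state Kalman filter. The closed loop is then a Schur-stable linear system driven by an input that vanishes asymptotically. Its state obeys the convolution bound
\[
\|x_{e,k}\|\le c\lambda^k\|x_{e,0}\|+c\sum_{j<k}\lambda^{k-1-j}\|B_d\|\|\tilde d_j\|,
\]
with $\lambda<1$, and this bound tends to zero by the standard Toeplitz/discrete-convolution lemma. Hence $e(k)\to0$.

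Strictly, the regulator acts on the estimated state $\hat x_e$ rather than on $x_e$. The state-estimation error also vanishes under (i), so it enters Step 2 as one more decaying input. This does not change the conclusion.

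\textbf{Step 4: asymptotically constant disturbance.} For $d_k\to d_\infty$, the same cascade applies, now with $\tilde d_k=d_k-\hat d_k$. This tends to zero by the hypothesis that the estimator keeps tracking.

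\textbf{Main obstacle.} The delicate point is Step 1: verifying that the centring exactly cancels the disturbance in the free response, so that no residual term $\hat d$ survives in $F$ other than $-\hat d$. This hinges on $D_{\text{bar}}=\Gamma(\mathbf 1_N\otimes I_3)$ and on the disturbance sharing the input matrix $B_d(\rho)$ at a frozen configuration. I would check it first by expanding the gradient $h=\Gamma^\top\bar Q(\Phi x_e+D_{\text{bar}}\hat d)+\bar R d_N$ and confirming it equals $H d_N+\Gamma^\top\bar Q\Phi x_e$.
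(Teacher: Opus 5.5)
Your proposal is correct and follows the paper's proof essentially step for step. The shared steps are the centring substitution $V=U+\hat d_N$ via $D_{\text{bar}}=\Gamma(\mathbf 1_N\otimes I_3)$, the resulting first move $\Fmpc=-Kx_e-\hat d$, and the closed loop $x_e^+=(A_d-B_dK)x_e+B_d(d_\infty-\hat d)$ treated as a Schur-stable system with a vanishing input; your convolution bound and your remarks on state-estimation error and the asymptotically constant disturbance only spell out what the paper compresses into its final one-line convergence step.
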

\begin{proof}
Let $\hat d_N=\mathbf 1_N\otimes\hat d$ and define the centred input sequence
$V=U+\hat d_N$. Since $D_{\text{bar}}=\Gamma(\mathbf 1_N\otimes I_3)$, the
identity
\[
  \Phi x_e+D_{\text{bar}}\hat d+\Gamma U=\Phi x_e+\Gamma V
\]
holds for any disturbance estimate $\hat d$, not only at steady state.
Crucially, the original effort term in~\eqref{eq:centeredcost} is
$\tfrac12(U+\hat d_N)^\top\bar R(U+\hat d_N)$, not
$\tfrac12U^\top\bar RU$. Therefore the change of variables gives exactly
\begin{equation*}
 J(V)=\tfrac12(\Phi x_e+\Gamma V)^\top\bar Q(\Phi x_e+\Gamma V)
 +\tfrac12 V^\top\bar R V .
\end{equation*}
Thus the nominal controller is exactly the stabilizing linear regulator for
the disturbance-free backbone in the variable $V$: $V^\star=-Kx_e$. The
applied first input is therefore $\Fmpc=v_0-\hat d$, and the plant evolves as
\begin{equation*}
  x_e^+=(A_d-B_dK_0)x_e+B_d(d_\infty-\hat d),
\end{equation*}
where $K_0$ is the first block row of $K$. By assumption (iii), this
unconstrained law applies after some finite time. By assumption (ii), its nominal
closed loop is asymptotically stable. By assumption (i), the additive input
$B_d(d_\infty-\hat d)$ vanishes, so the stable linear system converges to the
same equilibrium. Hence $x_{e,k}\to0$ and therefore $e_k\to0$.

The theorem does not claim zero error for arbitrary non-convergent
time-varying forces or for steady states that require sustained actuator
saturation; those cases require a separate invariant-set or anti-windup
analysis.
\end{proof}

\begin{remark}[Frozen-configuration scope]
Theorem~\ref{thm:zss} is a \emph{frozen-configuration} offset-free result:
it fixes $\rho_k=q$ so that $B_d(\rho_k)$ and the regulator gain are
constant. A supplementary vertex-LMI calculation establishes a common fixed
stabilizer for the scheduled backbone over the sampled operating polytope, but
that gain is not the deployed finite-horizon MPC policy and therefore does not
remove the frozen-configuration caveat for the actual controller. The
moving-reference experiments in Section~\ref{sec:experiments} are empirical
illustrations, not consequences of that certificate; the per-equilibrium property is
Theorem~\ref{thm:zss} applied at each sustained-contact configuration the
trajectory visits.
\end{remark}

\textbf{Scheduled-backbone certificate.}
Over the benchmark trajectory, 4700 samples of $\Lambda^{-1}(q)$ define a
64-vertex operating polytope. A feasible common-quadratic vertex LMI certifies
one fixed stabilizer with contraction factor $\varrho\le0.996$ for this sampled
scheduled backbone. This does \emph{not} certify the deployed receding-horizon
policy. The full statement, matrix conventions, proof, containment audit, and
numerical certificate are therefore placed in the supplementary material
(Sec.~S2, Theorem~S1) and reproduced by
\texttt{lmi\_workspace\_stability\_probe.py}.

\subsection{Disturbance Prediction Validity over the Horizon}

A natural question for any offset-free MPC is how good the disturbance
prediction is \emph{over the prediction horizon}, not merely at the current
step. The QP propagates the augmented model
\eqref{eq:lpv},~\eqref{eq:augstate} under the random-walk prediction
$\dhat_{k+i\mid k}=\dhat_{k\mid k}$ --- a flat extrapolation that is exact only
for a strictly constant disturbance. For a time-varying aggregate disturbance
the prediction degrades with horizon depth $i$. We address this in three parts.

\textbf{Bound.} Assume the force-equivalent aggregate disturbance $d(t)$ is
Lipschitz continuous,
$\|\dot{d}(t)\|_2\le L_d$, and let $\|d_k-\dhat_{k\mid k}\|_2\le e_K$ be the
steady-state Kalman estimation error. Since the random-walk prediction is the
identity, the $i$-step prediction error obeys
\begin{equation}\label{eq:predbound}
  \|d_{k+i}-\dhat_{k+i\mid k}\|_2 \;\le\; e_K + L_d\,i\,\Delta t,
  \qquad i=0,\dots,N-1,
\end{equation}
where $L_d$ (units N/s) bounds the aggregate rate, including human force,
orientation/$\dot J_v\dot q$ coupling, model error, and configuration-dependent
force-equivalent terms. In S7's static-target experiment the applied sinusoid
is the dominant commanded variation, but $L_d$ is computed directly from the
reconstructed aggregate $d_k$. The quantity $e_K$ is the current Kalman error; the bound
follows by the triangle inequality on $d_{k+i}=d_k+\int\dot{d}\,d\tau$. The
first term is the \emph{current} estimation error (to be specified or
measured for the operating condition); the second is the \emph{extrapolation}
error, linear in horizon depth and unavoidable for any constant-disturbance
internal model.

\textbf{Why it is not load-bearing.} Three properties keep
\eqref{eq:predbound} benign. (i)~\emph{Receding horizon:} only $\Fmpck{0}$ is
applied and the QP is re-solved every $\Delta t$ with a fresh
$\dhat_{k\mid k}$, so the loose late-horizon predictions never reach the plant
--- they only shape the near-term optimum. (ii)~\emph{Short horizon:} at
$N=10$, $\Delta t=10$\,ms (100\,ms; 20\,ms at 500\,Hz) the extrapolation term
is small in absolute units --- for a sustained push (the case of
Theorem~\ref{thm:zss}) $L_d\approx0$ and the flat-extrapolation term vanishes
(although current estimator error may remain). When the aggregate rate is
$L_d\lesssim5$\,N/s, the worst-case end-of-horizon error is
$L_d N\Delta t\lesssim0.5$\,N, a small fraction of the
available control authority. (iii)~\emph{Structured augmentation:} when the
human force has known temporal structure --- physiological tremor
(8--12\,Hz) or a slow guidance ramp --- the random-walk block can be replaced
by the corresponding internal model (a harmonic-oscillator state at the tremor
frequency, or a constant-plus-ramp double integrator) so that the component is
\emph{predicted} forward exactly rather than extrapolated flat. This grows the
estimator by a few states, leaves the constant-$A_d$ error block (and hence
the precomputed $\Phi$) untouched, and removes that component from $L_d$ in
\eqref{eq:predbound}.

\textbf{Metric.} The natural diagnostic is the \emph{$N$-step
disturbance-prediction RMS}, $\varepsilon_N=\mathrm{RMS}_k\,\|d_k-\dhat_{k\mid
k-N}\|$ --- the error of the estimate propagated $N$ steps before the
measurement --- monitored alongside the one-step value $\varepsilon_1$. For
the step-force benchmark (Section~\ref{sec:experiments}) the disturbance is
piecewise constant, so $\varepsilon_N\approx\varepsilon_1$ except across the
force-onset transient. In the supplementary time-varying-force experiment
(Sec.~S7, Table~S4), simulator-ground-truth $d_k$ is reconstructed from the
rigid-body acceleration identity and used to evaluate both $\varepsilon_1$
and $\varepsilon_N$ directly. Across the tested rates, the measured
$\varepsilon_N$ stays below the RMS counterpart of~\eqref{eq:predbound}; this
is a numerical check over the stated protocol, not a proof for untested
contacts or sensing conditions.

\section{Orientation Stabilization and Null-Space Control}

\subsection{Orientation Stabilization}

Orientation is regulated by a PD law in operational
space~\cite{khatib1987unified}.
The axis-angle error $e_R\in\mathbb{R}^3$ is extracted from
$R_d^\top R$~\cite{siciliano2009robotics}:
\begin{equation}\label{eq:torient}
  \tau_{\text{orient}} = J_\omega^\top(-K_{\text{rot}}\,e_R - D_{\text{rot}}\,\omega)
\end{equation}
with $K_{\text{rot}}=20$\,Nm/rad, $D_{\text{rot}}=6$\,Nm$\cdot$s/rad,
following the operational-space impedance structure of~%
\cite{khatib1987unified,albu2007unified}.
This runs at 1\,kHz as a \emph{separate} control channel from the
translational QP in \eqref{eq:tau} --- not a dynamically decoupled one:
the null-space projector $\bar N^\top$ there applies only to
$\tau_{\text{null}}$, and $J_\omega^\top F_{\text{orient}}$ still couples
into translational acceleration through the mass matrix, as
\eqref{eq:dist}'s orientation-coupling term makes explicit. That residual
dynamic coupling is part of $d_{\text{acc}}(t)$~\eqref{eq:dist}, whose
force-equivalent is what $\dhat$ estimates and cancels --- not eliminated
by projection.

\subsection{Null-Space Joint-Limit Safety and Invariance}\label{sec:cbf}

Because both the translational MPC channel and the orientation PD channel
are actively controlled, the primary task is the full 6-DOF pose; for the
FR3 ($n=7$) this leaves $n-6=1$ null-space DOF.
The null-space torque $\Nbar^\top\tau_{\text{null}}$ (where
$\Nbar=I-\bar{J}J$~\cite{khatib1987unified,sentis2005synthesis} is the
projector of the full pose Jacobian) does not affect end-effector position
or orientation~\cite{sentis2005synthesis}.

\textbf{Null-space inverse-barrier.}
Joint-limit regulation via null-space repulsive potentials is a standard
secondary objective~\cite{khatib1987unified,sadeghian2014task}.
Define the clearance (in rad) to the nearest limit:
\begin{equation*}
  \phi_i(q) = \min(q_i-q_{\min,i},\;q_{\max,i}-q_i)\ge 0
\end{equation*}
The inverse-barrier gradient~\cite{sadeghian2014task}:
\begin{equation}\label{eq:barrier}
  g_i(q) = \begin{cases}
    +k_b\!\left(\tfrac{1}{\phi_i}-\tfrac{1}{\delta_i}\right)
      & \phi_i<\delta_i,\;q_i\text{ near lower limit}\\
    -k_b\!\left(\tfrac{1}{\phi_i}-\tfrac{1}{\delta_i}\right)
      & \phi_i<\delta_i,\;q_i\text{ near upper limit}\\
    0 & \text{otherwise}
  \end{cases}
\end{equation}
with threshold $\delta_i=\eta(q_{\max,i}-q_{\min,i})$ ($\eta=0.10$),
$k_b=50$\,Nm.
Combined with a centering spring~\cite{sentis2005synthesis}:
\begin{equation}\label{eq:tnull}
  \tau_{\text{null}} = -k_{\text{null}}(q-q_0)+g(q)-d_{\text{null}}\dot{q}
\end{equation}
with $k_{\text{null}}=10$\,Nm/rad, $d_{\text{null}}=2$\,Nm$\cdot$s/rad,
toward the FR3 neutral pose
$q_0=[0,-0.785,0,-2.356,0,1.571,0.785]^\top$.

\textbf{Task-space workspace projection.}
When joints approach limits, the null-space alone cannot prevent
violations for task-constrained DOF~\cite{sadeghian2014task}.
We additionally project via the Jacobian
pseudo-inverse~\cite{khatib1987unified}:
\begin{equation}\label{eq:projection}
  \delta p = k_{\text{ws}}\,(J_vJ_v^\top+\epsilon_r I)^{-1}J_v\,g(q),
  \quad\|\delta p\|\leq p_{\max}
\end{equation}
with $k_{\text{ws}}=5\times10^{-4}$\,m/(Nm/rad), $p_{\max}=0.06$\,m,
Tikhonov regularization $\epsilon_r=10^{-8}$~\cite{siciliano2009robotics}.
The QP optimizes toward $p_{d,\text{eff}}=p_d+\delta p$.

\textbf{Conditional sampled-data joint-limit filter.}
The dual-barrier controller above provides practical joint-limit regulation. A conditional
forward-invariance certificate is obtained by adding a one-step
joint-limit control-barrier filter to the commanded torque. Let
\begin{equation*}
  h_i^-(q)=q_i-q_{\min,i}-\epsilon,\qquad
  h_i^+(q)=q_{\max,i}-\epsilon-q_i ,
\end{equation*}
and define the certified safe set
\begin{equation*}
  \mathcal C_\epsilon=\{q:\ h_i^-(q)\ge0,\ h_i^+(q)\ge0,\ i=1,\ldots,n\}.
\end{equation*}
At sampling time $k$, approximate the next joint position under a
constant torque over one servo period $T_s$ by
\begin{equation}\label{eq:qpred}
  q_{k+1|k}=q_k+T_s\dot q_k+
  \tfrac12T_s^2 M^{-1}(q_k)\bigl(\tau_k+\hat\tau_{\rm ext,k}-b_k\bigr),
\end{equation}
where $b_k=C(q_k,\dot q_k)\dot q_k+G(q_k)$,
$\hat\tau_{\rm ext,k}=J^\top(q_k)\hat{\mathcal F}_{h,k}$ is the measured or
estimated external generalized force, and $\tau_k$ is the final command after
the nominal MPC and null-space terms. If no reliable external-force estimate
is available, both the lower- and upper-limit inequalities in \eqref{eq:cbf}
should be robustly tightened using a bound on
$\|\tau_{\rm ext}-\hat\tau_{\rm ext}\|$ propagated through
$\tfrac12T_s^2M^{-1}(q_k)$. The safety filter selects the closest feasible
torque to the nominal command subject to
\begin{equation}\label{eq:cbf}
  h_i^\pm(q_{k+1|k})\ge(1-\alpha)h_i^\pm(q_k),
  \qquad i=1,\ldots,n,\quad \alpha\in(0,1].
\end{equation}
Since \eqref{eq:qpred} is affine in $\tau_k$, \eqref{eq:cbf} adds
linear inequalities to the final torque projection.

\begin{theorem}[Sampled-Data Joint-Limit Invariance]\label{thm:invariance}
Assume $q_0\in\mathcal C_\epsilon$, the model used in
\eqref{eq:qpred}, including $\hat\tau_{\rm ext,k}$ or a valid two-sided robust
tightening, matches the sampled plant at $k{+}1$ or encloses it within the
tightened lower/upper bounds, and the safety-filter constraints \eqref{eq:cbf}
are feasible and enforced at every sample. Then $\mathcal C_\epsilon$ is
forward invariant for the sampled joint positions: $q_k\in\mathcal C_\epsilon$
for all $k\ge0$.
\end{theorem}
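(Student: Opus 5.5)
The plan is induction on the sample index $k$, using the discrete CBF condition \eqref{eq:cbf} as the inductive mechanism. The base case is the hypothesis $q_0\in\mathcal C_\epsilon$, which says $h_i^\pm(q_0)\ge0$ for every $i$.

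For the inductive step, assume $q_k\in\mathcal C_\epsilon$, so $h_i^\pm(q_k)\ge0$. By feasibility and enforcement, the applied torque $\tau_k$ satisfies \eqref{eq:cbf}. Since $\alpha\in(0,1]$ gives $1-\alpha\in[0,1)$, this yields
\[
h_i^\pm(q_{k+1|k})\ge(1-\alpha)h_i^\pm(q_k)\ge0 .
\]
Thus the \emph{predicted} next configuration lies in $\mathcal C_\epsilon$.

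The remaining step transfers this from $q_{k+1|k}$ to the realized sample $q_{k+1}$. This is where the model-matching hypothesis is used, and it is the only genuinely delicate point. I would split it into two cases.

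\emph{Exact case.} The model \eqref{eq:qpred} with $\hat\tau_{\rm ext,k}$ reproduces the sampled plant at $k{+}1$. Then $q_{k+1}=q_{k+1|k}$ and the conclusion is immediate.

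\emph{Robust case.} Each $h_i^\pm$ is affine in $q$ with slope $\pm1$. The true and predicted positions therefore differ in the $i$th barrier by $|q_{k+1,i}-q_{k+1|k,i}|$, which is bounded by the $i$th component of $\tfrac12T_s^2M^{-1}(q_k)(\tau_{\rm ext}-\hat\tau_{\rm ext})$ plus any enclosed discretization mismatch. Call this bound $\beta_{i,k}$. The two-sided tightening replaces \eqref{eq:cbf} by $h_i^\pm(q_{k+1|k})-\beta_{i,k}\ge(1-\alpha)h_i^\pm(q_k)$. Hence $h_i^\pm(q_{k+1})\ge h_i^\pm(q_{k+1|k})-\beta_{i,k}\ge0$. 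Because $h^-$ and $h^+$ have opposite slopes, each needs its own one-sided margin, which is why the hypothesis demands that both lower and upper inequalities be tightened.

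In either case $q_{k+1}\in\mathcal C_\epsilon$, and induction gives $q_k\in\mathcal C_\epsilon$ for all $k\ge0$.

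The main obstacle is making precise what ``encloses the sampled plant within the tightened bounds'' means. The hypothesis must guarantee that the realized $q_{k+1}$ lies within the tightened margins of $q_{k+1|k}$ componentwise. I would state this explicitly as the interpretation of the assumption, rather than derive it from \eqref{eq:eom}. A derivation would require bounding the variation of $M$, $b$ and $\hat\tau_{\rm ext}$ over the servo period, which the theorem deliberately leaves to the enclosure assumption.

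I would also remark on scope. The result is conditional on feasibility at every sample; that is assumed, not proved, and could fail near the limits when torque saturation and \eqref{eq:cbf} conflict. It is also only a sampled-data statement, so it gives no guarantee on the intersample trajectory.
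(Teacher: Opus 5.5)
Your proposal is correct and follows the paper's argument: induction from $q_0\in\mathcal C_\epsilon$, where \eqref{eq:cbf} together with $1-\alpha\ge0$ gives $h_i^\pm(q_{k+1|k})\ge0$, and the model-matching hypothesis carries this over to the realized $q_{k+1}$. The paper's proof only writes out the exact-matching equality $q_{k+1}=q_{k+1|k}$, so your margin argument $h_i^\pm(q_{k+1})\ge h_i^\pm(q_{k+1|k})-\beta_{i,k}\ge(1-\alpha)h_i^\pm(q_k)\ge0$ for the two-sided tightened case fills in the enclosure branch of the hypothesis rather than taking a different route.
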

\begin{proof}
Suppose $q_k\in\mathcal C_\epsilon$. Then
$h_i^\pm(q_k)\ge0$ for every joint. By \eqref{eq:cbf},
\[
  h_i^\pm(q_{k+1})=h_i^\pm(q_{k+1|k})
  \ge (1-\alpha)h_i^\pm(q_k)\ge0,
\]
where equality between $q_{k+1}$ and $q_{k+1|k}$ follows from the
one-step model-matching assumption. Hence $q_{k+1}\in\mathcal C_\epsilon$.
Induction from $q_0\in\mathcal C_\epsilon$ proves the claim.
\end{proof}

The theorem is deliberately conditional: it certifies invariance only
when the one-step constraints are feasible and enforced. In the
implementation the filter is the \emph{final} torque projection (after the
nominal MPC and null-space terms), so it takes priority over the soft
null-space barrier and workspace projection, which only \emph{bias} the
arm away from limits. When the constraints~\eqref{eq:cbf} cannot all be
met --- e.g., barrier and CBF pull in conflicting directions, or
$\tau_{\text{base}}$ alone is already limit-inducing --- they are relaxed
with a penalized slack $s\ge0$ ($\min\|\tau-\tau_{\text{nom}}\|^2+w\,s^2$
s.t.\ $h_i^\pm(q_{k+1|k})\ge(1-\alpha)h_i^\pm(q_k)-s$), returning the
minimum-violation command rather than reporting infeasibility; a logged
$s=0$ is exactly the per-sample certificate that the theorem held without
relaxation. The experiments
below report the empirical behavior of the dual-barrier joint-limit
regulation law;
hardware runs intended to claim certified joint-limit safety should log
the CBF residuals and the slack $s$. The certificate is sampled-data:
possible intersample motion is covered only to the extent that the margin
$\epsilon$ and any robust tightening dominate the within-sample excursion.

\begin{remark}[Robust tightening bound and ISSf]
Let $\delta\tau_k$ collect the unmodeled generalized force in the one-step
predictor (human-torque estimation error $J^\top(F_h-\hat F_h)$, finite-jerk
torque change, and inertial-model error), with $\|\delta\tau_k\|\le B_\tau$. The
induced one-step position error $\tfrac12\Delta t^2 M^{-1}\delta\tau_k$ is bounded
by $\Delta q_{\rm safe}=\tfrac12\Delta t^2\|M^{-1}\|_2 B_\tau$; tightening the
one-step constraints by $\Delta q_{\rm safe}$ makes
Theorem~\ref{thm:invariance} \emph{input-to-state safe}---the sampled positions
stay in $\mathcal C_\epsilon$ for any disturbance within $B_\tau$ provided
$\epsilon\ge\Delta q_{\rm safe}$. At $\Delta t=1$\,ms with
$\|M^{-1}\|_2\lesssim10$\,kg$^{-1}$ this is $\approx5\times10^{-6}B_\tau$ rad
($B_\tau$ in N$\cdot$m), well within the experimental $\epsilon$ margin.
\end{remark}

\subsection{Energy-Tank Passivity Layer for Co-Manipulation}\label{sec:passivity}

The offset-free predictive controller is designed for precision disturbance
rejection; intentional co-manipulation additionally needs a passivity
certificate at the human--robot port. We add a sampled energy tank on the
translational task channel. Let $F_k^0$ be the predictive task force before
passivity filtering, $v_k=\dot p_k$ the measured end-effector translational
velocity, and $E_k\in[E_{\min},E_{\max}]$ the tank energy. The layer scales
only the task force,
\begin{equation}\label{eq:tank_scale_force}
  F_k=s_kF_k^0,\qquad s_k\in[0,1],
\end{equation}
leaving feedforward, orientation, and null-space torques outside the tank
accounting. Define $P_k^0=\max\{(F_k^0)^\top v_k,0\}$ and optional human
recharge power
\begin{equation*}
  P_{h,k}=\gamma\max\{\hat F_{h,k}^\top v_k,0\},\qquad \gamma\in[0,1].
\end{equation*}
The scale is
\begin{equation}\label{eq:tank_scale}
s_k=
\begin{cases}
1, & P_k^0=0,\\[1mm]
\min\!\left(1,\dfrac{(E_k-E_{\min})/T_s+P_{h,k}}{P_k^0}\right), & P_k^0>0,
\end{cases}
\end{equation}
and the tank update is
\begin{equation}\label{eq:tank_update}
E_{k+1}=\operatorname{sat}_{[E_{\min},E_{\max}]}
\!\left(E_k+T_s(P_{h,k}-\max\{F_k^\top v_k,0\})\right).
\end{equation}

\begin{proposition}[Sampled Passivity of the Task Channel]\label{prop:tank}
If $E_0\ge E_{\min}$ and \eqref{eq:tank_scale}--\eqref{eq:tank_update}
are enforced at every servo sample, then $E_k\ge E_{\min}$ for all $k$ and
\begin{equation*}
\sum_{j=0}^{k-1}T_s\max\{F_j^\top v_j,0\}
\le E_0-E_{\min}+\sum_{j=0}^{k-1}T_sP_{h,j}.
\end{equation*}
\end{proposition}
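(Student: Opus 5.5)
The plan is to prove both claims together by induction on $k$. The inductive invariant is $E_k\ge E_{\min}$. The key per-step inequality to establish is
\[
T_s\max\{F_k^\top v_k,0\}\le E_k-E_{k+1}+T_sP_{h,k}.
\]
Summing this from $j=0$ to $k-1$ telescopes to $E_0-E_k+\sum_j T_sP_{h,j}$. Applying $E_k\ge E_{\min}$ then gives the stated bound.

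\textbf{Step 1: the scaled output power.} Assume $E_k\ge E_{\min}$. Then the numerator $(E_k-E_{\min})/T_s+P_{h,k}$ in \eqref{eq:tank_scale} is nonnegative, since $\gamma\ge0$ gives $P_{h,k}\ge0$. Hence $s_k\in[0,1]$, as required by \eqref{eq:tank_scale_force}. Because $s_k\ge0$, we have $F_k^\top v_k=s_k(F_k^0)^\top v_k$, and therefore $\max\{F_k^\top v_k,0\}=s_kP_k^0$. I split into the two cases of \eqref{eq:tank_scale}.
\begin{itemize}
\item If $P_k^0=0$, then $s_k=1$ and $(F_k^0)^\top v_k\le0$, so the delivered power is $0$.
\item If $P_k^0>0$, the $\min$ in \eqref{eq:tank_scale} gives $s_kP_k^0\le (E_k-E_{\min})/T_s+P_{h,k}$.
\end{itemize}
In both cases
\[
T_s\max\{F_k^\top v_k,0\}\le E_k-E_{\min}+T_sP_{h,k}.
\]

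\textbf{Step 2: the saturated update.} Let $\tilde E_{k+1}=E_k+T_s(P_{h,k}-\max\{F_k^\top v_k,0\})$ denote the unsaturated update. Step 1 gives $\tilde E_{k+1}\ge E_{\min}$. So the lower clip in \eqref{eq:tank_update} is never active, which yields $E_{k+1}\ge E_{\min}$ and closes the induction. The upper clip can only lower the value, so $E_{k+1}=\min(\tilde E_{k+1},E_{\max})\le\tilde E_{k+1}$. This is exactly the per-step inequality above.

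\textbf{Main obstacle.} The delicate point is the sign direction of the saturation. Upper clipping discards energy, and I must check that this only helps the inequality: it does, since it makes $E_{k+1}$ smaller. Lower clipping would break the inequality, so it must be shown to be inactive, which is precisely what Step 1 guarantees. A secondary point is the degenerate case $P_k^0=0$, where $s_k=1$. Here I will note explicitly that the delivered power is still zero, rather than relying on the ratio formula.
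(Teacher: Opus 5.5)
Your proposal is correct and follows essentially the same route as the paper's proof. Both use the per-step bound $T_s\max\{F_k^\top v_k,0\}\le E_k-E_{\min}+T_sP_{h,k}$ from \eqref{eq:tank_scale}, the resulting inactivity of the lower clip, and a telescoping sum. Your explicit observation that the upper clip only yields $E_{k+1}\le\tilde E_{k+1}$, which is the favorable direction, rigorously fills in what the paper compresses into ``summing the unsaturated balance.''
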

\begin{proof}
By construction, \eqref{eq:tank_scale} gives
$T_s\max\{F_k^\top v_k,0\}\le E_k-E_{\min}+T_sP_{h,k}$. Substitution into
\eqref{eq:tank_update} gives $E_{k+1}\ge E_{\min}$ before saturation, and
the saturation preserves the lower bound. Summing the unsaturated balance
gives the stated dissipation inequality.
\end{proof}

\textbf{Scope.} Proposition~\ref{prop:tank} bounds only the scaled
translational predictive-force channel $F_k$ --- a channel-level energy
budget, not a full 6-DOF port passivity proof, and it should not be read
as guaranteeing coupled stability against an arbitrary passive
environment. Each excluded channel is \emph{locally} well-behaved in
isolation --- the orientation channel is a passive PD
($K_{\text{rot}},D_{\text{rot}}>0$) with respect to its own
$(\tau_{\text{orient}},\omega)$ port, the null-space torque only
redistributes energy within the redundant subspace, and the feedforward
gravity/Coriolis compensation is workless at steady state --- but this
does not by itself bound the \emph{cross-coupling} power these channels
can deliver into translation through the mass matrix:
\eqref{eq:dist} explicitly retains an orientation-to-translation coupling
term $-J_vM^{-1}J_\omega^\top F_{\text{orient}}$, so $\tau_{\text{orient}}$
can do translational work the tank accounting does not see. A full-port
certificate --- bounding the coupled channels jointly, not just each in
isolation --- is a genuine extension, not a straightforward one, and is
left to future work.

In hardware logs, the certificate is the tuple
\texttt{passivity\_certified}, \texttt{passivity\_energy}, and
\texttt{passivity\_scale}. The default $\gamma=0$ gives a conservative
certificate; $\gamma>0$ allows a configured fraction of measured human input
work to recharge the tank.

\section{Real-Time Implementation}

The optimization dimension is independent of the number of joints, but the
implementation is not robot independent: it requires $M$, $C\dot q+G$, the
full Jacobian and pose projector, joint limits, and actuator constraints.
Substituting those interfaces preserves the 30-variable Layer-2 QP.
Reference implementation uses the Franka FR3 via
FCI~\cite{franka2023fr3}.

\begin{table}[!t]
\caption{Computational Budget per Control Layer}
\label{tab:compute}
\renewcommand{\arraystretch}{1.0}
\centering
\setlength{\tabcolsep}{4pt}
\footnotesize
\begin{tabular}{@{}llcc@{}}
\toprule
\textbf{Layer} & \textbf{Computation} & \textbf{Rate} & \textbf{Budget}\\
\midrule
Feedforward & $\tau_{\text{ff}}$: \texttt{qfrc\_bias}$+J_v^\top\Lambda\ddot{p}_d$ & 1\,kHz & $<$0.1\,ms\\
QP          & OSQP, 30 vars, first torque row       & 100\,Hz  & $<$1\,ms\\
Kalman      & 9-state predict + update              & 100\,Hz  & $<$0.1\,ms\\
Orientation & $J_\omega^\top(-K_{\text{rot}}e_R-D_{\text{rot}}\omega)$ & 1\,kHz & $<$0.1\,ms\\
Null-space  & Barrier gradient + projection         & 1\,kHz   & $<$0.1\,ms\\
\bottomrule
\end{tabular}
\end{table}

\textbf{QP solver.}
We use OSQP~\cite{stellato2020osqp} with warm-starting, which in our
simulation implementation reduced cold-start latency from 5\,ms to under
0.5\,ms (single-threaded, commodity CPU). Table~\ref{tab:timing} reports
measured per-tick wall-clock time (Kalman step, $\Gamma$/$H$/$h$ rebuild,
warm-started OSQP solve, and torque assembly) for C5 and C7 over the
Benchmark~I protocol, on a single-threaded Python/OSQP implementation
(Apple M1 Max) --- a simulation-side timing characterization, not a
validated embedded or real-time-kernel deployment.

\begin{table}[!t]
\caption{Per-Tick Solve Time and Deadline Misses}
\label{tab:timing}
\renewcommand{\arraystretch}{1.0}
\centering
\footnotesize
\setlength{\tabcolsep}{3.5pt}
\begin{tabular}{@{}lrrrrrr@{}}
\toprule
\textbf{Rate} & \textbf{Mean} & \textbf{p50} & \textbf{p99} & \textbf{Max} & \textbf{Deadline} & \textbf{Misses}\\
 & \textbf{[ms]} & \textbf{[ms]} & \textbf{[ms]} & \textbf{[ms]} & \textbf{[ms]} &\\
\midrule
100\,Hz (C5) & 0.57 & 0.54 & 0.75 & 24.15$^\dagger$ & 10.0 & 1/2400\\
500\,Hz (C7) & 0.51 & 0.49 & 0.71 & 1.65 & 2.0 & 0/12000\\
\bottomrule
\end{tabular}
\end{table}
\noindent $^\dagger$One-time OSQP problem setup on the episode's first
tick; excluding it, the steady-state maximum is 1.04\,ms and both rates
have zero misses. Timing excludes the separate MuJoCo dynamics query.

Although $H=\Gamma^\top\bar{Q}\Gamma+\bar{R}$ varies with configuration
through $B_d(\rho_k)$, updating it requires only overwriting the
pre-allocated non-zero elements of the upper-triangular CSC sparse
matrix — a $O(N^2)$ coefficient update completed in under 0.05\,ms via
\texttt{osqp\_update\_P}, preserving OSQP's factorization-reuse
structure. The linear term
$h=\Gamma^\top\bar{Q}x_{\text{free}}+\bar R d_N$ is updated similarly
via the OSQP linear-cost update; the $+\bar R d_N$ term is required by
the offset-free input-centering argument in Theorem~\ref{thm:zss}.

\textbf{ZOH policy.}
Between QP solves, $\Fmpc$ is held constant.
Layer~1 ($\tau_{\text{ff}}$) is recomputed at 1\,kHz from fresh
$(q,\dot{q})$, continuously cancelling gravity and Coriolis.

\textbf{Simulation.}
All experiments use the FR3 model from MuJoCo
Menagerie~\cite{mujoco_menagerie} with the MuJoCo physics
engine~\cite{todorov2012mujoco} at 1\,kHz. The hardware-path MuJoCo
verification suite runs the same FR3 interface used for real deployment,
including torque-rate limiting, the one-step CBF projection, and the
energy-tank passivity filter; direct benchmark scripts are used only for
algorithm comparisons and figures.

Every numeric constant used across the experiments of
Section~\ref{sec:experiments} is consolidated in Table~S1 of the
supplementary material.

\section{Experiments}\label{sec:experiments}

We evaluate the controller on two dynamic benchmarks and two stress
tests, followed by one focused fairness study. \textbf{DI-MPC} denotes the proposed double-integrator MPC
realization, and \textbf{MPVIC} is the predictive variable-impedance
comparator, in the class of~\cite{liu2025model,roveda2019optimal}. MPVIC
rolls out the same horizon and uses the same estimate $\hat d$ as the
proposed controller, but only to \emph{schedule} the apparent stiffness
$K^\star\in\{200,\dots,3000\}$\,N/m rather than to cancel the force ---
isolating variable-impedance adaptation from offset-free augmentation.
Controllers in the original benchmark use $K_d=300$\,N/m nominal, critically damped, with the
inner feedforward loop running at 1\,kHz for every variant. Tables
report every controller; plots show four representative curves
(D1--D3, D7) for readability.

\begin{table}[!t]
\caption{Benchmarked Controllers}
\label{tab:controllers}
\renewcommand{\arraystretch}{1.0}
\footnotesize
\begin{tabular}{lll}
\toprule
\textbf{ID} & \textbf{Controller} & \textbf{Disturbance estimator}\\
\midrule
C1 & Classical Impedance   & ---\\
C2 & Admittance ($K_a=100$\,N/m) & Virtual spring\\
C3 & PI Impedance ($K_{\text{int}}=80$\,N/(m$\cdot$s)) & Integral\\
MPVIC & Var.-Imp.\ MPC, 100\,Hz & Kalman (sched.)\\
C4 & DI-MPC, 100\,Hz    & None\\
C5 & DI-MPC + Kalman, 100\,Hz & Augmented Kalman\\
C6 & DI-MPC, 500\,Hz    & None\\
C7 & DI-MPC + Kalman, 500\,Hz & Augmented Kalman\\
\bottomrule
\end{tabular}
\end{table}

\textbf{Benchmark I --- circular trajectory under step force.}
The end-effector tracks a 12\,cm radius circle in the XZ sagittal plane
($\omega=2\pi/8$\,rad/s) while a 15\,N step force $F_h=[0,0,-15]$\,N is
applied for 3\,s of every 8\,s cycle (3 cycles, 24\,s total); metrics
are averaged over the three force events.

\begin{table}[!t]
\caption{Benchmark I --- Circular Trajectory, 15\,N Step Force}
\label{tab:bm1}
\renewcommand{\arraystretch}{1.0}
\footnotesize
\begin{tabular}{lcccc}
\toprule
\textbf{Controller} & \textbf{RMS} & \textbf{RMS} & \textbf{Peak} & \textbf{SS}\\
 & \textbf{total} & \textbf{contact} & \textbf{defl.} & \textbf{err.}\\
 & \textbf{[mm]} & \textbf{[mm]} & \textbf{[mm]} & \textbf{[mm]}\\
\midrule
C1 -- Impedance     & 35.6 & 41.1 & 51.8 & 44.8\\
C2 -- Admittance    & 113.9 & 174.7 & 210.5 & 186.7\\
C3 -- PI Impedance  & 36.2 & 27.4 & 43.7 & 21.4\\
MPVIC -- Var.-Imp. MPC & 12.9 & 4.5 & 7.5 & 4.8\\
C4 -- MPC 100\,Hz   & \textbf{11.4} & 2.2 & 2.9 & 2.8\\
C5 -- MPC+K 100\,Hz & \textbf{11.4} & 0.5 & 2.6 & $<$0.05\\
Observer -- fixed gain+K & 11.6 & 0.5 & \textbf{2.4} & 0.055\\
C6 -- MPC 500\,Hz   & 12.8 & 0.8 & \textbf{1.1} & 1.1\\
C7 -- MPC+K 500\,Hz & 12.7 & \textbf{0.32} & 1.48 & \textbf{0.029}\\
\bottomrule
\end{tabular}
\end{table}

\begin{figure}[!t]
  \centering
  \includegraphics[width=0.8\columnwidth]{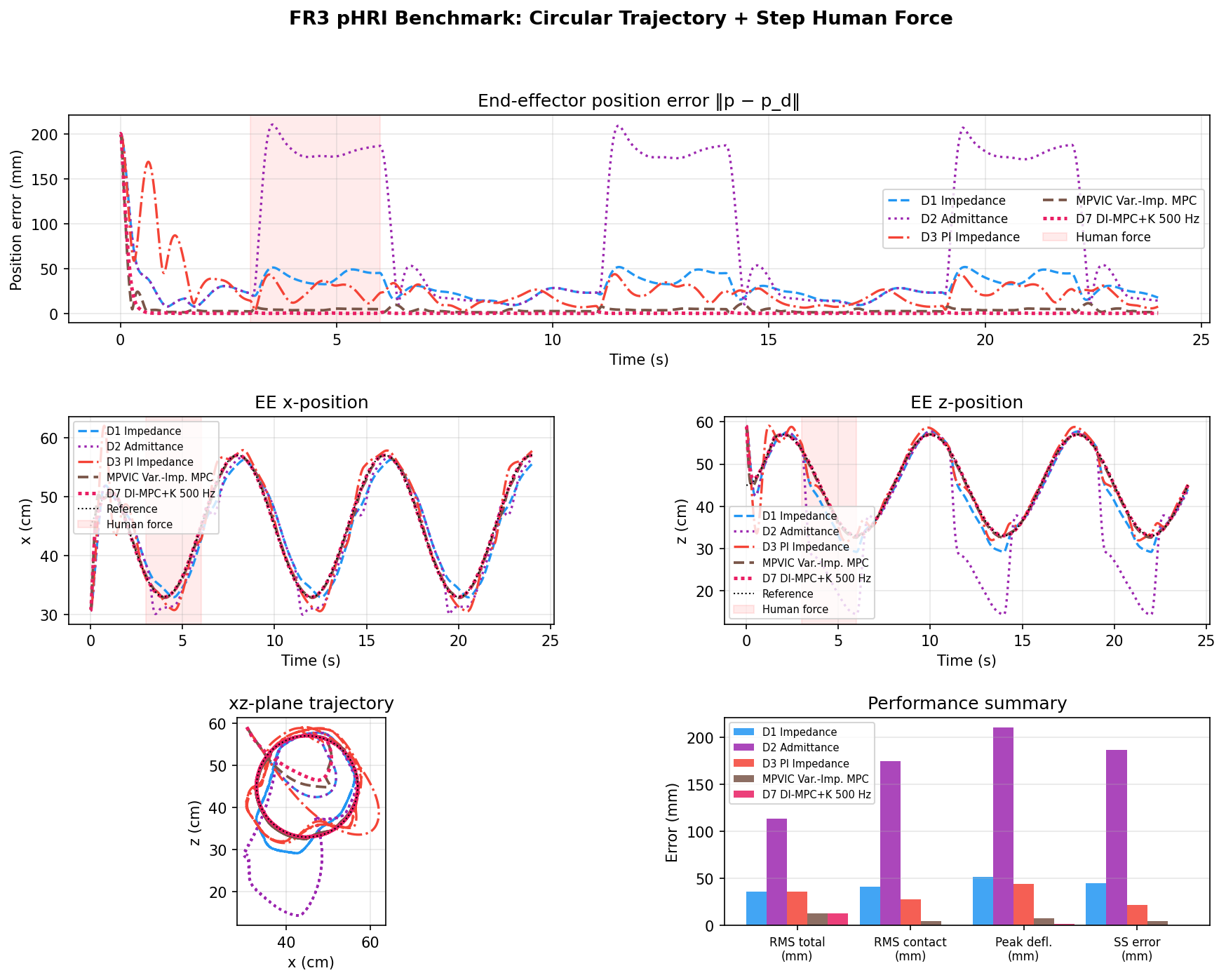}
  \caption{Benchmark I --- Circular trajectory under 15\,N step pHRI.
           For visibility the plot shows D1/D2/D3/D7 (corresponding to
           C1/C2/C3/C7) plus the predictive variable-impedance MPVIC
           baseline; Table~\ref{tab:bm1} reports the full ablation. The
           final D7 controller achieves near-zero steady-state error and
           sub-millimeter peak deflection, while MPVIC---though predictive---retains
           a visible steady-state offset.}
  \label{fig:bm1}
\end{figure}

The original comparison isolates the estimator and update-rate mechanisms;
the gain-confounded C1--C4 contrast is resolved separately below.

\begin{itemize}[leftmargin=*,itemsep=1pt,topsep=2pt]
\item \textbf{Kalman augmentation removes steady-state error (C4 vs.\
C5).} With QP rate and horizon held fixed, folding $\hat d$ into the
free response drives steady-state error below 0.05\,mm --- down from
2.8\,mm over C4 and 44.8\,mm over C1. Only the offset-free
disturbance-augmentation mechanism differs between C4 and C5; horizon,
weights, update rate, and torque interface are identical.
\item \textbf{The estimator, not horizon prediction, drives the
improvement (Observer vs.\ C5).} To isolate the estimator's credit from
the receding-horizon optimization's, the \emph{Observer} row tracks
$F=K_ee+K_{\dot e}\dot e-\hat d$ with $[K_e,K_{\dot e}]$ fixed to C5's own
realized unconstrained first-move gain (Theorem~\ref{thm:equiv}), evaluated
once, so no horizon is re-optimized online; it uses the identical Kalman
estimator. It matches C5 closely (0.5\,mm contact RMS both, 2.4 vs.\
2.6\,mm peak) and nearly matches its steady-state error (0.055 vs.\
$<$0.05\,mm), confirming that the estimator accounts for most of the
reported improvement under nominal actuation headroom.
\S\ref{sec:tauactive} isolates the effect of embedding and relaxing the
applied-torque row under active actuation limits, by tightening the
torque budget until the two mechanisms separate.
\item \textbf{Nominal impedance versus MPC is gain-confounded (C1 vs.\ C4).}
The 300\,N/m C1 setting is intentionally compliant and is not C4's realized
closed-loop gain.  We therefore do not interpret their ratio as an
architecture effect; Table~\ref{tab:fair} supplies a calibrated comparison.
\item \textbf{Adapting stiffness alone is not enough (MPVIC vs.\ C5).}
MPVIC uses the \emph{same} $\hat d$ as C5 but only to schedule
stiffness, cutting contact RMS to 4.5\,mm --- better than every
reactive baseline --- yet it retains a 4.8\,mm steady-state error,
since a finite apparent stiffness can only bound deflection at
$e_\infty=-K^{\star-1}\hat d\neq0$. Offset-free cancellation
(Theorem~\ref{thm:zss}) removes this residual entirely ($\sim$114$\times$).
MPVIC is our own discrete-stiffness scheduler in the family of
\cite{liu2025model,roveda2019optimal}, not a reproduction of a specific
published algorithm; its stiffness set and force penalty have not been
swept for sensitivity.
\item \textbf{Rate and estimation are not fully orthogonal (C4--C7).}
Raising the QP rate from 100 to 500\,Hz mainly shrinks the
zero-order-hold transient; adding the Kalman estimator mainly removes
the steady-state error. C7 combines both and gives the lowest
contact-window RMS and, of the exactly reported values, steady-state
error of any controller. Under the rate-proportional $Q_w$ scaling of
Table~\ref{tab:params}, however, C7 no longer has the lowest peak
deflection: C6's 1.1\,mm undercuts C7's 1.48\,mm, since at 500\,Hz the
faster rate alone already damps the first-contact transient and the
estimator's additional corrective action adds excess control effort
during that same window. C7's total RMS (12.7\,mm, averaged over the
whole cycle) is marginally higher than C4/C5's (11.4\,mm), since the
faster rate sharpens the contact transient without improving
free-motion tracking.
\item \textbf{Admittance yields by design (C2).} Its 174.7\,mm
contact-window RMS is close to the equilibrium prediction
$15/100=150$\,mm, the expected behavior of intentional compliance.
\end{itemize}

\textbf{Focused fair comparison under a common actuator budget.}
To remove the stiffness confound, a separate one-cycle calibration first
estimates C4's equilibrium stiffness as
$\|F_h\|/\|e_{\rm ss}\|=5441$\,N/m. A $3\times3$ grid over stiffness scale
$(0.8,1.0,1.2)$ and damping ratio $(0.7,1.0,1.3)$ then minimizes the sum of
contact-onset trajectory RMSE and steady-state mismatch to C4; it selects
$K=5441$\,N/m and $\zeta=1.3$. A torque-aware PI baseline is independently
tuned over nine $(K_i,I_{\max})$ pairs. We also include first-order filtered
measured-force cancellation ($50$\,ms time constant) on the
\emph{stiffness-and-damping-calibrated} impedance backbone. It receives the exact simulated
wrench and is therefore a favorable sensor-based comparator. This is a deliberately simple disturbance-rejection
baseline motivated by force-estimation and ADRC approaches
\cite{jordana2024force,wu2025ensuring}, not a reproduction of either published
algorithm. Final metrics use three evaluation cycles of the same periodic
trajectory, not independently randomized scenarios. Every method uses the same
1\,kHz realization and FR3 torque limits
$[87,87,87,87,12,12,12]$\,Nm.

\begin{table}[!t]
\caption{Focused Fair Comparison, 15\,N Repeated Step}
\label{tab:fair}
\centering
\footnotesize
\setlength{\tabcolsep}{2.3pt}
\begin{tabular}{@{}lrrrr@{}}
\toprule
\textbf{Controller} & \textbf{Contact} & \textbf{SS} & \textbf{Sat.} & \textbf{Peak $P^+$}\\
 & \textbf{RMS [mm]} & \textbf{[mm]} & \textbf{[\%]} & \textbf{[W]}\\
\midrule
Nominal impedance, 300\,N/m & 41.07 & 44.82 & 0 & 22.5\\
Calibrated impedance & 2.38 & 2.59 & 0.45 & 378.2\\
Tuned anti-windup PI & 27.10 & 20.38 & 0 & 58.3\\
Matched imp.+measured-force FF & 1.40 & 1.39 & 0.45 & 378.2\\
C4: DI-MPC 100\,Hz & 2.20 & 2.77 & 0 & 113.2\\
C5: DI-MPC+estimate 100\,Hz & \textbf{0.53} & \textbf{0.042} & 0 & 113.1\\
\bottomrule
\end{tabular}
\end{table}

\begin{figure}[!t]
  \centering
  \includegraphics[width=\columnwidth]{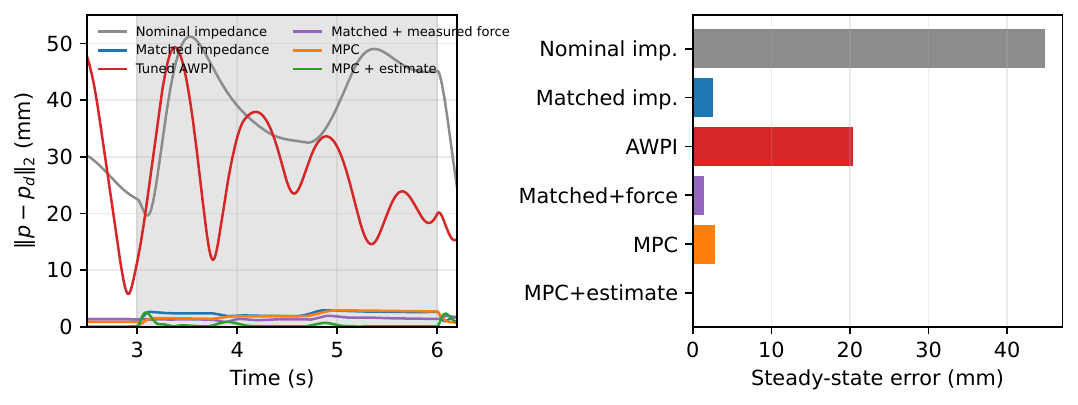}
  \caption{Held-out three-cycle fair comparison. Calibration largely closes
  the nominal-impedance/C4 gap, whereas the C4--C5 pair isolates the effect of
  force-domain disturbance augmentation. Direct measured-force feedforward
  also removes most load-axis bias, while the matched impedance realizations
  briefly saturate and reach substantially higher peak positive joint power.}
  \label{fig:fair}
\end{figure}

The calibrated impedance and C4 have similar steady-state errors
(2.59 and 2.77\,mm), so the original C1--C4 ratio is not evidence of a new
prediction capability. Their transient resource use differs: the matched
impedance reaches 378.2\,W peak positive joint power and saturates in 0.45\% of
samples, versus 113.2\,W and no physical saturation for C4. Here ``Sat.'' means
that the raw command exceeds a joint magnitude limit by more than
$10^{-3}$\,Nm; no torque-rate, CBF, or tank filter is used in this focused
script. Both MPC variants complete all 2400 QP solves with zero failures. Their
largest raw limit excesses are only $3.2\times10^{-6}$ and
$1.9\times10^{-5}$\,Nm, respectively---OSQP feasibility-tolerance effects
below the saturation threshold---and no excess occurs between QP updates.
Most importantly, C4 and C5 have
identical horizon, weights, and torque interface; adding the disturbance state
reduces steady-state error by $65.8\times$ without increasing peak positive joint power. The
matched measured-force baseline reduces total steady-state error to 1.39\,mm
and force-axis error to 0.50\,mm. Thus current-step force cancellation also
rejects the constant load, as expected; its remaining error is moving-reference
tracking in the other task directions. C5 does not require direct force
measurement and reaches 0.042\,mm total error. The PI baseline also improves on
nominal impedance but does not match C5 in this experiment. These results establish a tradeoff in
this benchmark, not general superiority over published ERG, action-governor,
or predictive-safety-filter implementations.

\textbf{Empirical joint-limit avoidance by the dual soft barrier
(boundary test).}
A 20\,cm radius circle pushes the arm toward its workspace boundary
(Table~\ref{tab:safety}, Fig.~\ref{fig:safety}), exercising the soft
null-space barrier and workspace projection only; the final CBF filter
(Theorem~\ref{thm:invariance}) is not engaged or logged here.

\begin{itemize}[leftmargin=*,itemsep=1pt,topsep=2pt]
\item Classical impedance violates the 5\% margin; the dual-barrier MPC
avoids it, holding a 0.083 margin at both 100 and 30\,Hz ---
joint-limit avoidance is insensitive to QP rate.
\item The elevated MPC RMSE (13.3\,mm vs.\ 0.3\,mm at $R=12$\,cm)
reflects the workspace projection deliberately trading tracking
accuracy for clearance near the boundary.
\end{itemize}

\begin{table}[!t]
\caption{Boundary Test, $R=20$\,cm Circle}
\label{tab:safety}
\renewcommand{\arraystretch}{1.0}
\centering
\footnotesize
\setlength{\tabcolsep}{2.5pt}
\footnotesize
\begin{tabular}{@{}lccc@{}}
\toprule
\textbf{Controller} & \textbf{RMSE [mm]} & \textbf{Min margin} & \textbf{Result}\\
\midrule
IMP         & 25.87 & 0.049 & \textsf{LIMIT VIOLATION}\\
MPC (C5, 100\,Hz QP) & 13.33 & 0.083 & \textsf{AVOIDED}\\
MPC (C5, 30\,Hz QP)  & 11.76 & 0.083 & \textsf{AVOIDED}\\
\bottomrule
\end{tabular}
\end{table}

\begin{figure}[!t]
  \centering
  \includegraphics[width=0.8\columnwidth]{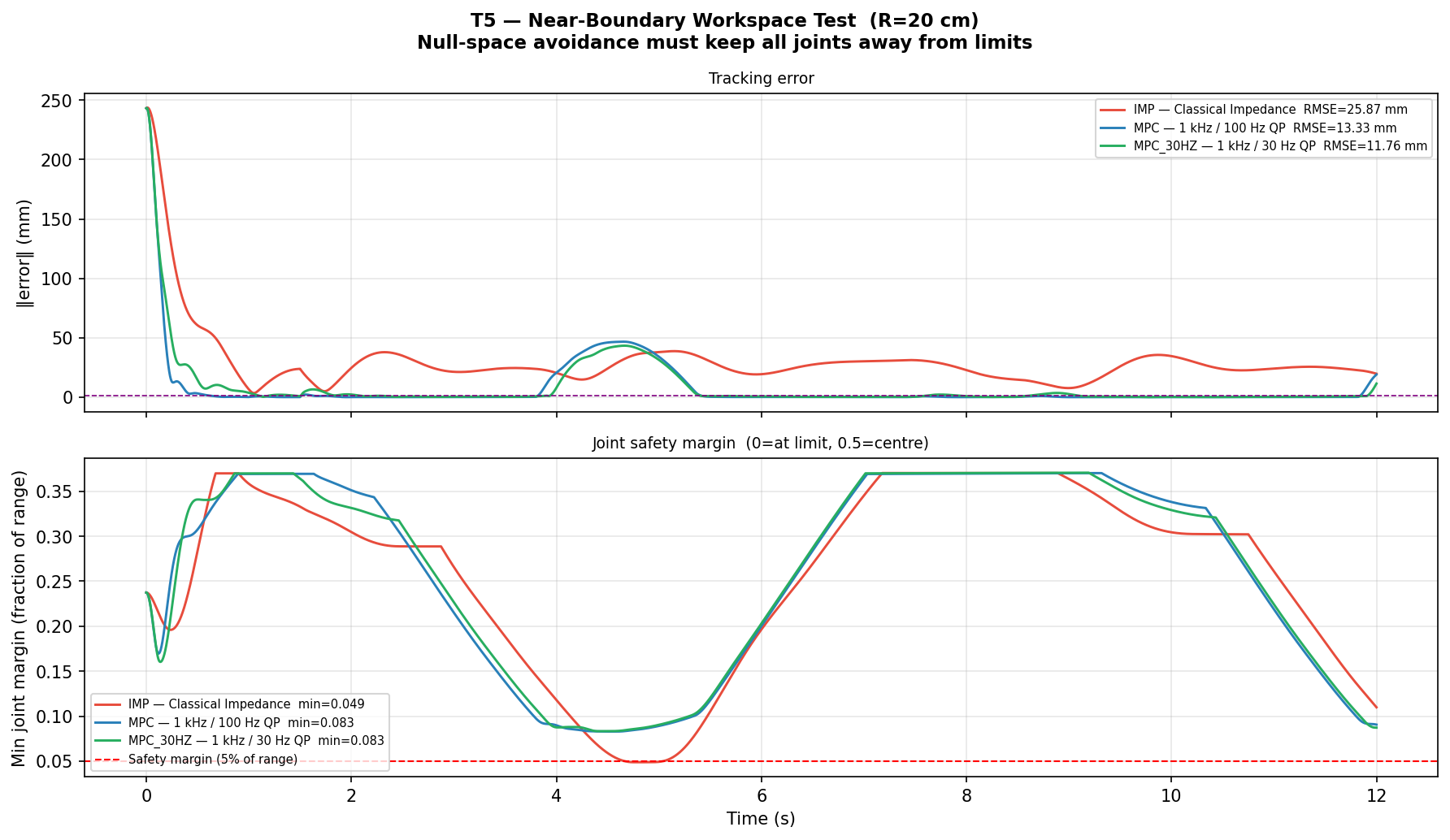}
  \caption{Boundary test ($R=20$\,cm).
           Classical impedance violates the 5\% joint margin;
           the dual-barrier MPC maintains a fractional joint margin
           $\geq0.083$ throughout.
           Elevated RMSE (13.3\,mm) reflects the workspace projection
           intentionally offsetting $p_d$ by up to 6\,cm to maintain
           joint-limit clearance.}
  \label{fig:safety}
\end{figure}

\textbf{Robustness to position-measurement noise and model mismatch.}
The architecture depends on a Kalman estimator, sensitive to
measurement noise, and on Layer-1 cancellation, sensitive to model
error, so we stress-test each independently: the plant is always the
true model, and only the controller's inputs (added EE-position noise)
or its model (the feedforward inertia $\Lambda$, scaled up by 0 to
30\% relative to the true plant) are perturbed. Velocity noise, sensing
latency, and force-sensor noise are not modeled here.

\begin{table}[!t]
\caption{Robustness of MPC+Kalman (100\,Hz QP)}
\label{tab:robust}
\renewcommand{\arraystretch}{1.0}
\centering
\footnotesize
\begin{tabular}{cccc}
\toprule
\multicolumn{4}{l}{\emph{(a) EE position noise --- 10\,N step, SS error
(5 seeds)}}\\
\midrule
noise $1\sigma$ & 0\,mm & 1\,mm & 2\,mm / 5\,mm\\
SS err (mm) & 0.08 & $0.19\pm.03$ & $0.37\pm.04$ / $0.89\pm.07$\\
\midrule
\multicolumn{4}{l}{\emph{(b) Inertia mismatch --- free-circle RMSE (mm)}}\\
\midrule
inertia error & 0\% & +10\% / +20\% & +30\%\\
Kalman on  & 0.24 & 0.24 / 0.23 & 0.23\\
Kalman off & 0.94 & 0.86 / 0.79 & 0.73\\
\bottomrule
\end{tabular}
\end{table}

\begin{itemize}[leftmargin=*,itemsep=1pt,topsep=2pt]
\item Under a 10\,N step (last 0.5\,s, 5 seeds), even 5\,mm ($1\sigma$)
position noise leaves steady-state error below 1\,mm and raises the transient
peak only from 1.6 to 2.2\,mm.
\item Under free-space XZ-circle tracking with a deliberately wrong
feedforward inertia (bottom table), Kalman-augmented RMSE stays
nearly unchanged through 30\% inertia error ($0.24\to0.23$\,mm), whereas
disabling Kalman gives 0.73--0.94\,mm (3--4$\times$ worse), isolating the
disturbance state's contribution.
\end{itemize}

\textbf{Further experiments and parameters.} The supplementary material
contains all parameters (Table~S1), a ground-truth horizon-prediction check
(S4), a force magnitude/shape sweep (S5), and the C8 correction-authority
ablation (S6).

\section{Discussion and Conclusion}

We presented a compact offset-free interaction-error MPC realization:
operational-space cancellation gives a constant-$A$ predictor, force-domain
augmentation shares the optimized force channel, and an affine constraint acts
on the complete applied torque, combined with conditional impedance-equivalence,
offset-free, and stabilizability results that state exactly when each
guarantee holds --- a validated engineering interface built from established
MPC, LPV, barrier, and passivity tools rather than a new control-theoretic
primitive.

\textbf{Where this places the contribution.} The double-integrator
representation is a general pattern for physical interaction, not one
particular to a fixed base: in selected task coordinates, after nominal
dynamic compensation, interaction reduces to a fixed linear system driven
by a single disturbance state, which is what makes it a predictive quantity
rather than a property re-modeled per contact or configuration. In a
model-based Physical AI stack --- perception and planning reasoning about
the world and the task, and a fast interaction interface turning that into
torque --- this paper establishes the base case of that shared
representation: a single fixed contact point, exact in force coordinates.
``Physical AI'' names that placement; it is not an additional experimental
claim. We call the shared double-integrator, force-domain representation
itself \textbf{Interaction Dynamics}. The same construction has since been
developed for floating-base humanoid locomotion~\cite{caowbc}, reactive
time-domain motion planning under dynamic-obstacle interaction~\cite{caodriving},
dexterous manipulation~\cite{caodex}, and steerable-catheter tissue
interaction~\cite{caocath}, each inheriting this paper's constant-transition
structure and specializing the input matrix to its own actuation and contact
model; none of those extensions is evaluated here.

Under a shared actuator budget and torque interface, the controlled C4--C5
ablation shows that the augmented disturbance estimate alone reduces
steady-state error from 2.77 to 0.042\,mm --- a $65\times$ improvement
obtained without direct force sensing, and without the brief saturation and
$3.3\times$ higher peak positive joint power a stiffness-and-damping-calibrated impedance baseline requires
to reach a comparable 2.59\,mm. The soft-barrier joint-limit avoidance, the
energy-tank passivity channel, and the disturbance-prediction bound each hold
over their respective tested and certified scope, detailed in the
corresponding sections above.

These results are established in a 7-DOF torque-controlled MuJoCo simulation
with model mismatch tested up to $30\%$. Extending this evaluation to
hardware --- logging tracking, force estimates, applied torque, CBF
residual/slack, and tank energy against a canonical predictive interaction
controller under shared tuning --- is the natural next step for this
architecture.


\clearpage
\setcounter{section}{0}
\setcounter{table}{0}
\setcounter{figure}{0}
\setcounter{theorem}{0}
\setcounter{remark}{0}
\renewcommand{\thesection}{S\arabic{section}}
\renewcommand{\thetable}{S\arabic{table}}
\renewcommand{\thefigure}{S\arabic{figure}}
\renewcommand{\thetheorem}{S\arabic{theorem}}
\renewcommand{\theremark}{S\arabic{remark}}

\begin{quote}
\small\itshape
This document accompanies the main paper and contains: the full
controllability/detectability argument and LMI-based stabilizability
proof for the LPV backbone, the parameter table, and additional
experiments that supplement, but are not required to follow, the main
paper's core argument --- the waypoint-hold benchmark (redundant with
Benchmark~I's conclusions), the time-varying-force horizon-prediction
study, the human-force magnitude/shape sweep, and the impedance-backbone
correction-authority ablation. Table numbers here are prefixed ``S'' and
are independent of the main paper's numbering.
\end{quote}

\section{Controllability and Detectability of the LPV Backbone}\label{sec:controllability}

\begin{remark}[Controllability and Detectability]\label{rem:feasibility}
The frozen pair in (8) of the main paper is pointwise controllable at every
fixed configuration $\rho$ in the singularity-free workspace. Since
$\Lambda(q)$ in eq.~(2) is symmetric positive-definite for all $q$ away
from kinematic singularities, $\Laminv$ is full rank, so $B_d(\rho_k)$
has full column rank 3 (via its velocity block). With $A_d$ mixing
velocity into position, the position rows of $B_d$ and $A_dB_d$ differ
by $-\Laminv\Delta t^2$ (full rank), so the controllability matrix
$\mathcal C=[B_d\mid A_dB_d]$ has rank 6 for all $\rho_k$. If
$Q\succeq0$, $R\succ0$, and $(Q^{1/2},A_d)$ is detectable, the discrete
algebraic Riccati equation (DARE) admits a stabilizing solution at every
fixed configuration, and the infinite-horizon LQR gain
\begin{equation*}
K_\infty=(R+B_d^\top P_\infty B_d)^{-1}B_d^\top P_\infty A_d
\end{equation*}
is well-defined everywhere, with the feedback convention
$\Fmpc=-K_\infty x_e$. For the augmented estimator, observability
of $[e;\dot e]$ together with full column rank of $B_d$ makes the
constant disturbance state detectable: if a mode is unobservable from
$C=[I_6\ 0]$, its $x_e$ component is zero; the augmented dynamics then
require $B_dd=0$, hence $d=0$.
\end{remark}

This is the basis for Corollary~1's DARE-solvability claim and for the
LPV-backbone stabilizability result of Sec.~\ref{sec:stab}.

\section{Common-Quadratic Stabilizability of the Scheduled Error Predictor}\label{sec:stab}

\begin{theorem}[Quadratic Stabilizability of the Scheduled Error
Predictor]\label{thm:workspace}
Consider the LPV error predictor
$x_{e,k+1}=A_dx_{e,k}+B_d(\rho_k)\Fmpck{k}$ with constant $A_d$ and
$B_d(\rho)$ as in eq.~(8) of the main paper, and suppose the
operational-space inverse-inertia stays in a compact polytope over the
operating region of interest,
$\Laminv\in\mathcal P=\operatorname{conv}\{L_1,\dots,L_V\}$ (e.g.\ the
entry-wise box with $0\prec\lambda_{\min}I\preceq\Laminv\preceq\lambda_{\max}I$)
--- a certified bound on $\Laminv$, either established analytically over
the entire singularity-free workspace or, as instantiated in
Remark~\ref{rem:instantiation} below, sampled over a specific operating
region --- with vertex input matrices
$B_v=[-\tfrac{\Delta t^2}2L_v;\,-L_v\Delta t]$ (affine in $\Laminv$, so
the polytope structure is preserved). If there exist $Q=Q^\top\succ0$ and
$Y\in\mathbb R^{3\times6}$ and a prescribed $0<\varrho<1$ such that, for
every vertex $v=1,\dots,V$,
\begin{equation}
\begin{bmatrix}\varrho^2Q & (A_dQ+B_vY)^\top\\A_dQ+B_vY & Q\end{bmatrix}\succ0,
\label{eq:lmi}
\end{equation}
then the \emph{fixed} state-feedback law $\Fmpc=Kx_e$ with $K=YQ^{-1}$
makes $V(x)=x^\top Px$, $P=Q^{-1}$, a \emph{common} quadratic Lyapunov
function: $V(x_{e,k+1})<\varrho^2V(x_{e,k})$ for
\emph{every} admissible configuration trajectory
$\{\rho_k\}\subset\mathcal P$, so this one fixed linear feedback law is
exponentially stabilizing uniformly over $\mathcal P$ --- not only at a
frozen configuration. This is a statement about the LPV \emph{backbone}
under a single certified feedback gain, not directly about the
finite-horizon receding-horizon MPC actually deployed (see the scope
remark below).
\end{theorem}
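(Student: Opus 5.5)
The argument is the standard common-quadratic-Lyapunov LMI construction, in three steps: convert each vertex LMI into a Lyapunov decrease at that vertex, extend the decrease to the whole polytope by convexity, and chain the per-step decrease into a uniform exponential rate.

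\textbf{Step 1: Schur complement at a vertex.} Since the lower-right block $Q\succ0$, the Schur complement shows that \eqref{eq:lmi} is equivalent to
\[
\varrho^2Q-(A_dQ+B_vY)^\top Q^{-1}(A_dQ+B_vY)\succ0 .
\]
Set $K=YQ^{-1}$, so that $Y=KQ$ and $A_dQ+B_vY=(A_d+B_vK)Q$. Pre- and post-multiply by $P=Q^{-1}$ (a congruence, which preserves definiteness). This gives
\[
(A_d+B_vK)^\top P(A_d+B_vK)\prec\varrho^2P \qquad \text{for every vertex } v .
\]

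\textbf{Step 2: Extension from vertices to the polytope.} I would argue at the level of the LMI itself, not the squared Lyapunov inequality. The map $\Lambda^{-1}\mapsto B_d(\Lambda^{-1})=[-\tfrac{\Delta t^2}{2}\Lambda^{-1};\,-\Delta t\,\Lambda^{-1}]$ is linear. Hence for $\Lambda^{-1}(q_k)=\sum_v\theta_vL_v$ with $\theta_v\ge0$ and $\sum_v\theta_v=1$, we have $B_d(\rho_k)=\sum_v\theta_vB_v$.

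The block matrix in \eqref{eq:lmi} is affine in $B_v$ for fixed $(Q,Y)$. The same convex combination of the vertex LMIs is therefore exactly the LMI with $B_v$ replaced by $B_d(\rho_k)$, and it is positive definite because a convex combination of positive-definite matrices is positive definite. Reversing Step 1 then yields
\[
A_{\rm cl}(\rho_k)^\top P A_{\rm cl}(\rho_k)\prec\varrho^2P, \qquad A_{\rm cl}(\rho_k)=A_d+B_d(\rho_k)K,
\]
for every $\rho_k$ whose $\Lambda^{-1}$ lies in $\mathcal P$.

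Step 2 is the only delicate point. The quadratic inequality from Step 1 is not convex in $B$, so the convexity must be applied to the linearized LMI form. This is exactly why the affine dependence of $B_v$ on $\Lambda^{-1}$ is emphasized in the statement.

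\textbf{Step 3: Arbitrary scheduling and exponential rate.} For any admissible sequence $\{\rho_k\}$, the closed loop under $\Fmpc=Kx_e$ is $x_{e,k+1}=A_{\rm cl}(\rho_k)x_{e,k}$. With $V(x)=x^\top Px$, Step 2 gives $V(x_{e,k+1})<\varrho^2V(x_{e,k})$ for $x_{e,k}\neq0$; the case $x_{e,k}=0$ is trivial.

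This inequality holds pointwise in $k$ using only the current $\rho_k$. No rate-of-variation bound is needed, so it applies to every trajectory in $\mathcal P$. Iterating gives $V(x_{e,k})\le\varrho^{2k}V(x_{e,0})$. Sandwiching $V$ between $\lambda_{\min}(P)\|x\|^2$ and $\lambda_{\max}(P)\|x\|^2$ then yields
\[
\|x_{e,k}\|\le\sqrt{\kappa(P)}\,\varrho^k\|x_{e,0}\| ,
\]
which is uniform exponential stability over $\mathcal P$.

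Finally, I would note the sign convention: the theorem writes $\Fmpc=Kx_e$, whereas the DARE convention elsewhere uses $-K$. The scope caveat, that this certifies the fixed gain $K$ and not the deployed receding-horizon MPC policy, needs no proof.
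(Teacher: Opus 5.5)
Your proposal is correct and follows the paper's proof: the linear dependence of $B_d$ on $\Lambda^{-1}$ and the convexity of the positive-definite cone extend the vertex LMI to all of $\mathcal P$, and then the Schur complement plus the congruence by $P=Q^{-1}$ gives the common Lyapunov decrease. Your explicit bound $\|x_{e,k}\|\le\sqrt{\kappa(P)}\,\varrho^k\|x_{e,0}\|$ and your $x_{e,k}=0$ caveat just spell out the paper's ``exponential stability'' conclusion; one side remark needs correcting, since for fixed $P$ and $K$ the quadratic inequality is not affine in $B$ but its solution set in $B$ is still convex (your own Step 1 makes it equivalent to the LMI, and $X\mapsto X^\top PX$ is matrix-convex).
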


\begin{proof}
$B_d(\rho)$ is affine in $\Laminv[\rho]$, so for any $\rho$ with
$\Lambda^{-1}(\rho)=\sum_v\alpha_vL_v$ ($\alpha_v\ge0$,
$\sum_v\alpha_v=1$) we have
$A_dQ+B_d(\rho)Y=\sum_v\alpha_v(A_dQ+B_vY)$. The block matrix in
\eqref{eq:lmi} is affine in this quantity, and the positive-definite
cone is convex, so \eqref{eq:lmi} holds for all $\rho\in\mathcal P$, not
only at the vertices. Taking the Schur complement of the lower-right
block gives
$(A_dQ+B_d(\rho)Y)^\top Q^{-1}(A_dQ+B_d(\rho)Y)\prec\varrho^2Q$.
Writing $A_\text{cl}(\rho)=A_d+B_d(\rho)K$ with $K=YQ^{-1}$ and
pre/post-multiplying by $Q^{-1}=P$ yields
$A_\text{cl}(\rho)^\top PA_\text{cl}(\rho)-\varrho^2P\prec0$, for
all $\rho\in\mathcal P$. Hence $V(x)=x^\top Px$ decreases along every
trajectory regardless of how $\rho_k$ varies, which is exponential
stability of this fixed-gain, disturbance-free LPV closed loop.
\end{proof}

This is the polytope-level counterpart of the per-configuration DARE
solvability of Remark~\ref{rem:feasibility} and the continuous scheduled
law of Corollary~1 of the main paper: it certifies that the
interaction-error \emph{representation} --- enabled by the constant
$A_d$, which confines all configuration variation to the affine
$B_d(\rho)$ and makes \eqref{eq:lmi} a finite vertex program --- is
quadratically stabilizable by a single Lyapunov function over
$\mathcal P$, under one fixed certified gain.

\textbf{Scope.} Theorem~\ref{thm:workspace} establishes exponential
stability of the \emph{nominal predictive backbone under one fixed,
LMI-certified linear feedback gain} --- the disturbance-free LPV closed
loop $(A_d,B_d(\rho))$, with the input constraint inactive --- not
workspace-wide stability of the complete deployed controller. In
particular it does \textbf{not} by itself analyze: the disturbance
estimator (offset-free tracking under the Kalman estimator is the
separately scoped Theorem~2 of the main paper, and is not composed with
Theorem~\ref{thm:workspace} here); the finite-horizon receding-horizon re-solve, whose
realized policy at each step is the QP's actual minimizer, not the
certified fixed $K$ (the unconstrained QP realizes \emph{a} member of
this same class of linear state feedback, Theorem~1/Corollary~1 of the
main paper, but not necessarily the specific certified $K$); or the
safety and passivity filters (Theorem~3, Proposition~2 of the main
paper). The complete controller's closed-loop guarantees are therefore
this \emph{collection} of separately-scoped results about different
feedback laws, not one monolithic proof that the deployed MPC is
workspace-stable.

\begin{remark}[Rate-independence and the frozen horizon]
The certificate uses a \emph{single, parameter-independent} Lyapunov
matrix $P=Q^{-1}$ shared by all vertices. A common-$P$ quadratic-stability
certificate is robust to \emph{arbitrary} time variation of
$\rho_k\in\mathcal P$ --- including arbitrarily fast changes --- with no
assumption on the parameter-variation rate $\|\rho_{k+1}-\rho_k\|$: the
Lyapunov decrease holds at every vertex and hence, by convexity of
\eqref{eq:lmi} in $\Laminv$, for any $\rho_k\in\mathcal P$ regardless of
how it moves, \emph{provided the certified fixed $K$ is what is actually
applied}. This gives a \emph{structural reason}, not a proof, for why the
frozen-$\Gamma$ approximation used online (built from the current
$B_d(q_k)$ and held over the $N$-step horizon) is unlikely to be
destabilizing: the gap between $B_d(q_k)$ and the true future
$B_d(q_{k+i})$ is a bounded variation \emph{inside} $\mathcal P$, and a
fixed-$K$ closed loop over $\mathcal P$ would not be destabilized by that
variation regardless of its rate. This motivates, but does not
establish, that the deployed receding-horizon policy --- which re-solves
a QP rather than applying the certified $K$ --- inherits the same
robustness; the frozen horizon shapes only the \emph{predicted}
trajectory, while the receding-horizon re-solve keeps the
\emph{applied} first move matched to the current $\rho_k$, but no result
in this paper proves the resulting closed loop shares
Theorem~\ref{thm:workspace}'s rate-independence.
\end{remark}

\begin{remark}[Numerical instantiation of $\mathcal P$]\label{rem:instantiation}
For the FR3, sampling $\Laminv$ along the main paper's \S V circular
benchmark (4700 samples) gives
$\operatorname{eig}\Laminv\in[0.059,0.354]\ \text{kg}^{-1}$ (task
inertia 2.8--16.9\,kg). The resulting entry-wise box has 64 corners in
general, but not every corner of an independent per-entry box need be
positive-definite; we discard any that are not (none were, here --- all
64 corners were confirmed SPD, minimum eigenvalue $\approx0.023\
\text{kg}^{-1}$ across vertices, comfortably above the regularization
floor). Direct entry-wise auditing reports zero containment violation for all
4700 samples. This instantiates $\mathcal P$ over the \emph{sampled operating
region traversed by the benchmark trajectory}, not a verified bound over
the entire singularity-free workspace; extending the certified range to
other trajectories or the full workspace would need re-sampling or an
analytic eigenvalue bound, neither done here. Over this $\mathcal P$
($\Delta t=10$\,ms), the minimum-condition-number common-$P$ certificate
at guaranteed rate $\rho\le0.996$ is feasible, returning a
well-conditioned common $P$ ($\operatorname{cond}P\approx2.2$) with
spectral radius $\le0.996$ and strictly negative Lyapunov increment
($\max\operatorname{eig}(A_\text{cl}^\top PA_\text{cl}-P)\approx-5\times10^{-3}$)
at every vertex. For the exact-ZOH model, CLARABEL reports
\texttt{optimal}; the target-rate residual
$\max\operatorname{eig}(A_\text{cl}^\top PA_\text{cl}-0.996^2P)$ is
$-9.0\times10^{-7}$ and the minimum eigenvalue of the LMI block is
$9.9\times10^{-7}$. These checks apply \emph{for the certified fixed
$K=YQ^{-1}$} --- the
realized MPC, with its far larger tracking weights and different
(receding-horizon) policy, is not shown to converge at this rate, only
observed empirically to converge faster. This establishes the
qualitative backbone-stabilizability claim rigorously over the sampled
region rather than empirically, and is reproducible from the released
script (\texttt{lmi\_workspace\_stability\_probe.py}).
\end{remark}

\section{Parameters}\label{sec:params}

Every numeric constant used across the experiments of the main paper
(\S VI) and this supplement is consolidated below, cross-checked against
the released files \path{simulation/impedance_mpc.py},
\path{simulation/phri.py}, and
\path{cloud_verify/lib/fr3_hardware_interface.py} rather than
restated from memory. Unless a table or figure caption states otherwise,
these are the values used throughout.

\begin{table}[!t]
\caption{Controller and Estimator Parameters}
\label{tab:params}
\renewcommand{\arraystretch}{1.0}
\footnotesize
\setlength{\tabcolsep}{3pt}
\begin{tabular}{p{2.3cm}p{1.2cm}p{2.5cm}}
\toprule
\textbf{Parameter} & \textbf{Symbol} & \textbf{Value}\\
\midrule
Prediction horizon & $N$ & 10\\
MPC sample time (100\,Hz) & $\Delta t$ & 10\,ms\\
MPC sample time (500\,Hz) & $\Delta t$ & 2\,ms\\
Inner-loop rate & --- & 1\,kHz\\
Position stage weight & $Q_\text{pos}$ & $2\times10^4 I_3$\\
Velocity stage weight & $Q_\text{vel}$ & $50\,I_3$\\
Terminal cost scale & $\gamma$ & 5\\
Effort weight & $R$ & $10^{-6}I_3$\\
Corrective-force bound & $F_\text{max}$ & 150\,N\\
FR3 torque limits & $\tau_\text{max}$ & $87\!\times\!4;\ 12\!\times\!3$\,Nm\\
$\Lambda^{-1}$ regularization & $\sigma$ & $10^{-6}$\\
Kalman process noise & $Q_d$ & 10.0 (2.0 in C6--C7)\\
Kalman obs.\ noise (pos.) & --- & $10^{-3}$\\
Kalman obs.\ noise (vel.) & --- & $10^{-2}$\\
Orientation stiffness & $K_\text{rot}$ & 20\,Nm/rad\\
Orientation damping & $D_\text{rot}$ & 6\,Nm$\cdot$s/rad\\
Null-space centering stiffness & $k_\text{null}$ & 10\,Nm/rad\\
Null-space damping & $d_\text{null}$ & 2\,Nm$\cdot$s/rad\\
Null-space barrier gain & $k_b$ & 50\,Nm\\
Barrier activation threshold & $\eta$ & 0.10\\
Workspace-projection gain & $k_\text{ws}$ & $5\times10^{-4}$ m/(Nm/rad)$^\dagger$\\
Workspace-projection cap & $p_\text{max}$ & 0.06\,m\\
Workspace-projection reg. & $\epsilon_r$ & $10^{-8}$\\
Backbone stiffness (C8) & $K_\text{bb}$ & 300\,N/m\\
Backbone damping ratio (C8) & $\zeta_\text{bb}$ & 1.0 (nominal)\\
CBF safety margin & $\epsilon$ & 0.05\,rad\\
CBF contraction rate & $\alpha$ & 0.5\\
CBF slack penalty weight & $w_s$ & $10^{8}$\\
Tank initial energy & $E_0$ & 20.0\,J\\
Tank lower bound & $E_\text{min}$ & 0.05\,J\\
Tank upper bound & $E_\text{max}$ & 50.0\,J\\
Human-recharge fraction & $\gamma$ (tank) & 0.0\\
OSQP tolerances & abs., rel. & $10^{-6}$\\
OSQP iteration cap & --- & 4000\\
Circular benchmark $R,T$ & --- & 0.12\,m, 8\,s\\
Human step force & $F_h$ & 15\,N, $t\in[3,6]$\,s\\
Reference ramp time & $T_\text{ramp}$ & 1.5\,s\\
FR3 neutral pose & $q_0$ &
$\begin{array}{c}[0,-0.785,0,\\-2.356,0,1.571,\\0.785]\end{array}$ rad\\
\bottomrule
\end{tabular}
\end{table}

Two parameter sets are estimator/CBF-internal and are not separately
swept: the Kalman noise covariances and the CBF/tank constants (the
latter are exercised in the \texttt{cloud\_verify} hardware-path suite
rather than the direct benchmark scripts used for the main paper's
tables).

$^\dagger$Benchmarks I and II in the main paper are run with
$k_\text{ws}=0$: the driver scripts
for those two circular/waypoint benchmarks disable workspace projection
so its effect does not confound the controller-to-controller
comparison. The $5\times10^{-4}$ default above is what the boundary
test in the main paper actually uses, which is where the
mechanism is exercised and discussed.

\section{Focused Fair-Comparison Protocol}\label{sec:fair}

The focused comparison in the main paper separates calibration from testing.
Calibration uses one 8\,s cycle; after selecting parameters, every reported
metric is recomputed in a newly reset environment over three cycles (24\,s).
No test-cycle metric enters parameter selection. The
stiffness-and-damping-calibrated impedance first uses $K_0=\|F_h\|/\|e_{\rm ss,C4}\|=5441$\,N/m, then evaluates
the Cartesian product
\[
K/K_0\in\{0.8,1.0,1.2\},\qquad
\zeta\in\{0.7,1.0,1.3\}.
\]
The score is contact-onset trajectory RMSE relative to C4 plus absolute
steady-state-error mismatch; $K=5441$\,N/m, $\zeta=1.3$ is selected. The
anti-windup PI grid is
\[
K_i\in\{120,300,700\},\qquad
I_{\max}\in\{0.02,0.06,0.12\}\ \text{m\,s},
\]
with score $e_{\rm ss}+0.25e_{\rm RMS,contact}+0.01r_{\rm sat}$; it selects
$K_i=120$ and $I_{\max}=0.02$\,m\,s. The complete 18-run calibration ledger,
selected scores, test metrics, QP counts, and saturation diagnostics are stored
in \path{simulation/fair_offset_free_comparison.json} and regenerated by
the identically named Python script.

The measured-force comparator uses the stiffness-and-damping-calibrated impedance and receives
the exact applied simulation wrench through a first-order filter
($T_f=50$\,ms), commanding $-J_v^\top\hat F_h$ in the same Cartesian force
channel through which $F_h$ enters. It reaches 1.39\,mm total steady-state
error (0.50\,mm on the force axis), so it confirms rather than denies that
current-step force cancellation rejects a constant load. Unlike C5, however,
it assumes a direct force measurement. ``Saturation'' in this experiment means
a raw magnitude-limit excess above $10^{-3}$\,Nm. Both MPC variants have zero
failed solves in 2400 updates and maximum raw excess below
$2\times10^{-5}$\,Nm; neither uses the CBF, tank, or torque-rate filter in this
focused script.

\section{Four-Plane Circle Tracking}\label{sec:planes}

C5 (MPC + Kalman, 100\,Hz QP, 1\,kHz inner) is evaluated on four 3-D
circle planes, complementing the XZ-plane result of Benchmark~I.
Table~\ref{tab:planes} reports RMSE after the 1.5\,s ramp.

\begin{table}[!t]
\caption{Free-Space Circle Tracking, $R=12$\,cm, $T=8$\,s}
\label{tab:planes}
\renewcommand{\arraystretch}{1.0}
\centering
\setlength{\tabcolsep}{4pt}
\footnotesize
\begin{tabular}{@{}lccc@{}}
\toprule
\textbf{Plane} & \textbf{IMP RMSE [mm]} & \textbf{MPC RMSE [mm]} & \textbf{Improvement}\\
\midrule
XZ sagittal  & 23.54 & 0.24 & $\times$96\\
XY horizontal & 25.06 & 0.33 & $\times$76\\
YZ frontal   & 16.25 & 0.21 & $\times$78\\
XZ$\to$XY tilted & 25.07 & 0.31 & $\times$80\\
\bottomrule
\end{tabular}
\end{table}

\textbf{Analysis.} The MPC maintains sub-0.35\,mm RMSE across all
planes, demonstrating that the $\Lam$-adaptive compliance in the QP
(eq.~9 of the main paper) handles the off-diagonal operational-space
inertia coupling without plane-specific gain retuning --- the controller was tuned
and validated on the XZ plane alone (Benchmark~I), and the other three
planes are a held-out generalization check, not a separately re-tuned
result.

\section{Benchmark II --- Reach-and-Hold under Human Push}\label{sec:bm2}

\textbf{Scenario.} The robot navigates a triangle of three waypoints (A,
B, C) in one lap. At each waypoint, 0.8\,s after arrival, a directionally
varied 15\,N push fires for 2\,s. The robot must recover and dwell
within 35\,mm of the target for 1\,s before advancing (3 push events, one
per waypoint).

\begin{figure}[!t]
\centering
\includegraphics[width=\columnwidth]{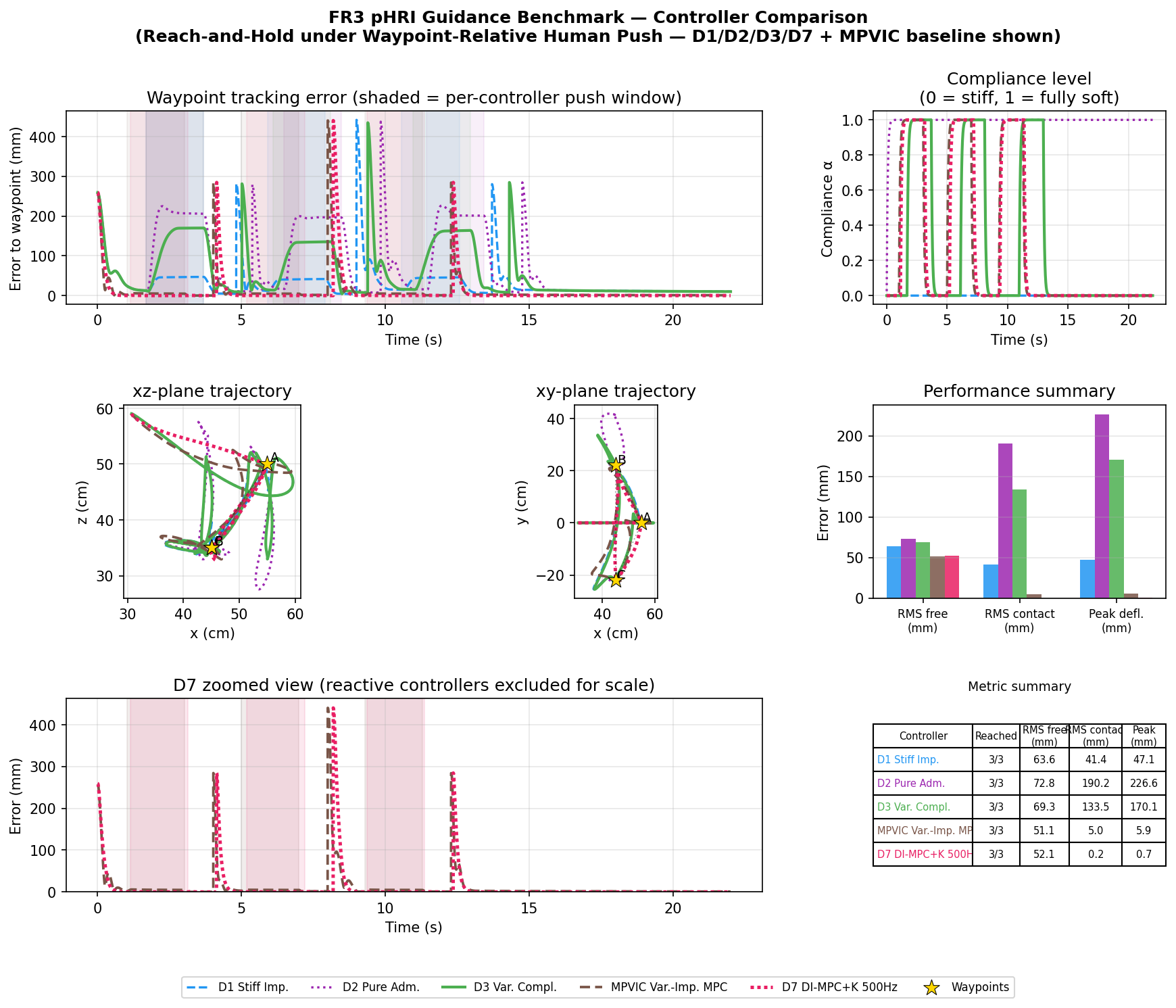}
\caption{Benchmark II --- reach-and-hold under directionally varied
push. For visibility the plot shows D1/D2/D3/D7 (corresponding to
G1/G2/G3/G7) plus the predictive variable-impedance MPVIC baseline;
Table~\ref{tab:bm2} reports the full ablation. The final D7 controller
reaches every waypoint with sub-millimeter contact-window deflection.}
\label{fig:bm2}
\end{figure}

\begin{table}[!t]
\caption{Benchmark II: Waypoint-Hold Under Directionally Varied Push}
\label{tab:bm2}
\renewcommand{\arraystretch}{1.0}
\footnotesize
\setlength{\tabcolsep}{3pt}
\begin{tabular}{lcccc}
\toprule
\textbf{Controller} & \textbf{Waypts} & \textbf{RMS free} & \textbf{RMS cont.} & \textbf{Peak}\\
\midrule
G1 -- Stiff Impedance    & 3/3 & 63.6 & 41.4 & 47.1\\
G2 -- Pure Admittance    & 3/3 & 72.8 & 190.2 & 226.6\\
G3 -- Variable Compl.    & 3/3 & 69.3 & 133.5 & 170.1\\
MPVIC -- Var.-Imp. MPC   & 3/3 & 51.2 & 5.0 & 6.0\\
G4 -- MPC 100\,Hz        & 3/3 & 47.9 & 2.2 & 2.6\\
G5 -- MPC+K 100\,Hz      & 3/3 & \textbf{47.9} & 0.6 & 2.4\\
G6 -- MPC 500\,Hz        & 3/3 & 51.9 & 0.8 & \textbf{1.0}\\
G7 -- MPC+K 500\,Hz      & 3/3 & 51.5 & \textbf{0.37} & 1.42\\
\bottomrule
\end{tabular}
\end{table}

\textbf{Analysis.} The static-waypoint hold confirms the same separation
seen in Benchmark~I, and is included here for completeness rather than
as an independent finding. All four MPC variants (G4--G7) reach every
waypoint with order-of-magnitude lower contact-window deflection than
the reactive baselines: against the stiff-impedance baseline (G1), the
MPC cuts contact-window RMS from 41.4\,mm to 2.2\,mm (G4) and peak
deflection from 47.1\,mm to 2.6\,mm. Adding the Kalman augmentation
improves the contact and free-motion metrics --- G5 lowers contact-window
RMS from 2.2\,mm to 0.6\,mm and slightly reduces peak (2.6$\to$2.4\,mm)
--- so unlike a higher impedance gain it carries no peak penalty. Raising
the QP rate to 500\,Hz then sharpens contact-window RMS further: G7
attains 0.37\,mm. Under the rate-proportional $Q_w$ scaling of
Table~\ref{tab:params}, however, it no longer has the lowest peak error
--- G6's 1.0\,mm undercuts G7's 1.42\,mm, mirroring the same pattern seen
in Table~\ref{tab:bm1}. The
pure-admittance and variable-compliance baselines (G2, G3) yield by
design, producing the large 190--134\,mm contact-window deflections
expected of intentional compliance. The predictive variable-impedance
baseline (MPVIC) behaves oppositely to the reactive variable-compliance
G3: instead of softening, it stiffens predictively to reject the push,
cutting contact-window RMS to 5.0\,mm, yet it stays a further order of
magnitude above the offset-free G5 (0.6\,mm) --- the same non-offset-free
residual seen in Benchmark~I. As in Benchmark~I, MPVIC here is our own
discrete-stiffness scheduler, not a reproduced published algorithm; see
the main paper's Benchmark~I discussion for the corresponding caveat.

\section{Time-Varying Interaction and Horizon-Prediction Validity}\label{sec:tv}

The benchmarks in the main paper apply constant or step forces. Because
the framework targets \emph{interaction} dynamics broadly, we
additionally exercise a \emph{time-varying} human force and measure the
$N$-step disturbance-prediction RMS $\varepsilon_N$ of
Section~III-D of the main paper --- the quantity that bounds how well the
flat random-walk prediction holds over the horizon. To reduce
configuration-driven variation, the robot
holds a \emph{static} target while a sinusoidal push
$F_z=-A\sin(2\pi ft)$ of fixed amplitude $A=12$\,N and increasing
frequency $f$ is applied, sweeping the disturbance rate
$A\,2\pi f$, the injected sinusoid's peak derivative. Because the theorem
concerns the aggregate disturbance, the table uses the reconstructed-log rate
$L_{d,\mathrm{emp}}=\max_k\|d_{k+1}-d_k\|_2/\Delta t$.
The controller is DI-MPC + Kalman (C5, 100\,Hz
QP).

\textbf{Direct ground-truth metric.} At each manager instant we compute
the simulator acceleration from the applied, torque-limited input and the
rigid-body dynamics, then reconstruct
\begin{equation*}
d_{\mathrm{acc},k}=\ddot e_k+\Lambda^{-1}(q_k)\Fmpck{k},\qquad
d_k=-\Lambda(q_k)d_{\mathrm{acc},k}.
\end{equation*}
For the flat prediction made at origin $k$, the direct metric is
$\varepsilon_i=\mathrm{RMS}_k\|d_{k+i}-\dhat_{k|k}\|_2$. We also report
$e_K=\mathrm{RMS}_k\|d_k-\dhat_{k|k}\|_2$. By Minkowski's inequality,
the pointwise rate assumption in the main paper's horizon-prediction bound
gives $\varepsilon_N\le e_K+L_dN\Delta t$; Table~\ref{tab:tv} evaluates it
with $L_d=L_{d,\mathrm{emp}}$. Both $e_K$ and $L_{d,\mathrm{emp}}$ are
estimated from this same logged trajectory, so what follows checks the
bound's internal arithmetic against its own inputs, not an a priori bound
validated against unseen data. Origins are restricted
to the settled portion of the force window and all $N$ target samples
remain inside that window (704 origins per row).

\begin{table}[!t]
\caption{Time-Varying Force: Direct Ground-Truth Prediction Error and Tracking}
\label{tab:tv}
\renewcommand{\arraystretch}{1.0}
\footnotesize
\setlength{\tabcolsep}{3pt}
\begin{tabular}{cccccc}
\toprule
$L_{d,\mathrm{emp}}$ [N/s] & $e_K$ & $\varepsilon_1$ & $\varepsilon_N$ & Bound & RMS [mm]\\
\midrule
0.51  & 2.20 & 2.20 & 2.20 & 2.25 & 0.08\\
7.57  & 0.88 & 0.90 & 1.13 & 1.64 & 0.12\\
15.13 & 2.45 & 2.50 & 3.08 & 3.97 & 0.26\\
30.44 & 4.51 & 4.62 & 5.94 & 7.55 & 0.54\\
61.45 & 5.07 & 5.36 & 8.38 & 11.22 & 1.04\\
\bottomrule
\end{tabular}
\end{table}

\textbf{Analysis.} The directly measured $N$-step error stays below the
RMS bound in every tested row. In the constant injected-force row,
$L_{d,\mathrm{emp}}=0.51$\,N/s captures the small variation of the other
aggregate residual terms, giving a 2.25\,N bound versus
$\varepsilon_N=2.20$\,N. This does not claim zero estimator error. In that
row, the mean error-vector norm is 0.74\,N and zero-mean fluctuation RMS is
2.07\,N; hence 2.20\,N is primarily fluctuation, not steady bias, while
closed-loop position RMS remains 0.08\,mm. As frequency increases, $\varepsilon_N$
rises from 1.13 to 8.38\,N while remaining below the corresponding
1.64--11.22\,N bounds. Tracking degrades gradually: it remains below
0.6\,mm through $L_{d,\mathrm{emp}}=30.44$\,N/s and reaches 1.04\,mm at
61.45\,N/s.
These data directly check that bound for the stated simulator protocol;
they do not establish the same bound under unmodelled contact dynamics,
sensor delay, or measurement noise. For a
\emph{structured} high-rate force (e.g.\ physiological tremor) the
harmonic-internal-model extension of Section~III-D would predict that
component forward exactly rather than extrapolate it flat. The
experiment and its machine-readable output are reproduced by
\texttt{time\_varying\_experiment.py} and
\texttt{time\_varying\_ground\_truth\_results.json}.

\section{Human-Force Magnitude and Shape Sweep}\label{sec:forcesweep}

Benchmark~I and \S\ref{sec:tv} fix the force magnitude at 12--15\,N.
Because the offset-free property (Theorem~2 of the main paper) is
independent of the force amplitude, we sweep a sustained (step) push
$F_z\in\{5,10,15,20,25\}$\,N on the circular tracking task and report the
steady-state and peak tip deflection for classical impedance (D1) and
the proposed offset-free controller (D7, DI-MPC + Kalman, 500\,Hz); a
shape sweep at 15\,N (step / linear ramp / 1\,Hz sinusoid) complements
the time-varying study of \S\ref{sec:tv}.

\begin{table}[!t]
\caption{Human-Force Magnitude Sweep (SS / peak deflection, mm)}
\label{tab:fsweep}
\renewcommand{\arraystretch}{1.0}
\footnotesize
\setlength{\tabcolsep}{3pt}
\begin{tabular}{lccccc}
\toprule
$F_z$ [N] & 5 & 10 & 15 & 20 & 25\\
\midrule
D1 Imp.\ -- SS   & 26.9 & 33.7 & 45.0 & 58.5 & 73.1\\
D1 Imp.\ -- peak & 28.4 & 37.0 & 51.2 & 70.1 & 88.9\\
\textbf{D7 -- SS}   & \textbf{0.024} & \textbf{0.023} & \textbf{0.023} & \textbf{0.023} & \textbf{0.023}\\
\textbf{D7 -- peak} & \textbf{0.57} & \textbf{0.96} & \textbf{1.46} & \textbf{1.96} & \textbf{2.45}\\
\bottomrule
\end{tabular}
\end{table}

\textbf{Analysis.} Classical impedance deflects in proportion to the
force --- the $e_\infty=K_d^{-1}F_h$ bias combined with the dynamic
tracking error grows from 27\,mm at 5\,N to 73\,mm at 25\,N --- whereas
the proposed controller holds an approximately \textbf{0.023\,mm
steady-state deflection independent of magnitude over the tested
5--25\,N unsaturated range}: the integrating disturbance state cancels
the sustained force in this range, so accuracy no longer trades against
the human's effort. The first-contact peak grows
more substantially (0.57 $\to$ 2.45\,mm across 5--25\,N), scaling with the
force step before $\dhat$ converges. Under the three 15\,N shapes the
proposed controller keeps contact-window RMS at 0.32\,mm (step), 0.19\,mm
(ramp), and 0.87\,mm (1\,Hz sinusoid) versus 40.9 / 26.0 / 33.0\,mm for
classical impedance; the sinusoid is the worst case, consistent with
\S\ref{sec:tv} --- the flat $\dot d=0$ model lags a fast-varying force,
which the harmonic internal model of Section~III-D of the main paper
would recover. Reproduced by \texttt{force\_sweep.py}.

\begin{figure}[t]
\centering
\includegraphics[width=\columnwidth]{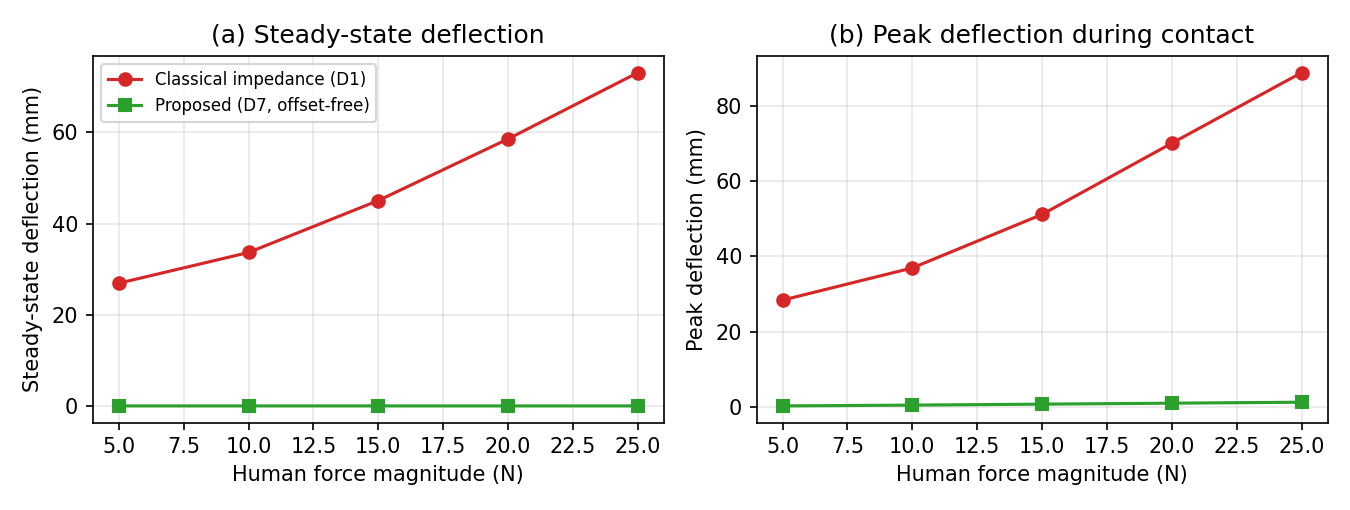}
\caption{Human-force magnitude sweep: steady-state (a) and peak (b) tip
deflection versus a sustained push, for classical impedance (D1, red)
and the proposed offset-free controller (D7, green). The proposed
steady-state deflection is invariant to force magnitude (0.023\,mm),
while classical impedance grows linearly.}
\label{fig:fsweep}
\end{figure}

\section{Correction-Authority Robustness (Impedance-Backbone Ablation)}\label{sec:backbone}

Every controller in the main paper places the entire task-space
correction inside one box-constrained decision variable, $\Fmpc$: if it
is driven toward zero --- a tight $F_\text{max}$ bound, or a horizon step
whose predicted torque is unrealizable --- the commanded torque degrades
to the feedforward term $\tau_\text{ff}$ alone, i.e.\ zero corrective
stiffness. We evaluate an architectural ablation, \textbf{C8}, that
decouples baseline corrective stiffness from the optimizer: a fixed,
positively damped impedance law
\begin{equation*}
F_\text{bb}=K_\text{bb}e+D_\text{bb}\dot e,\qquad K_\text{bb}=300\text{ N/m},
\end{equation*}
is commanded on every MPC update independent of the QP output
($D_\text{bb}=2\zeta_\text{bb}\sqrt{K_\text{bb}}$, a nominal
unit-effective-mass tuning --- $\Laminv$ is generally anisotropic, so
this is not exact modal-critical damping), and the QP shapes only a
bounded additional correction $\Fmpc$ ($\|\Fmpc\|_\infty\le F_\text{max}$)
predicted through the backbone dynamics linearized at the current
configuration, $A_\text{cl}=A_d+B_d(\rho_k)G_\text{bb}$. If the QP's
output is driven to exactly zero for any reason, the commanded controller
retains this non-zero corrective stiffness instead of reverting to bare
feedforward. This is QP-independence, not actuator-limit independence:
the \textbf{total commanded torque is
$\tau=\tau_\text{base}+J_v^\top(F_\text{bb}+\Fmpc)$} --- $F_\text{bb}$ is
not optional and must be included in any torque-feasibility check ---
and C8 additionally applies the main paper's eq.~(9b) row at
\emph{every} horizon step rather than only the first, frozen at the
current configuration:
\begin{align*}
-\tau_\text{max}&\le\tau_\text{base}(0)+J_v^\top(q_0)
\bigl(F_\text{bb}+\Fmpck{i}\bigr)\\
&\le\tau_\text{max},\qquad i=0,\ldots,N-1,
\end{align*}
the frozen-Jacobian horizon-wide extension of (9b) --- not implied by
(9b)/(9c) alone, and specific to C8, not part of the default (C1--C7)
formulation. Freezing $J_v(q_0)$, $\tau_\text{base}(0)$ across the
horizon is a local approximation, not an exact prediction of future
torque feasibility. With this in place, the evidence below supports
robustness to loss of \textbf{additive QP correction authority}
specifically, not a blanket guarantee under arbitrary actuator saturation
or solver failure.

\textbf{Scenario and protocol.} Identical to Benchmark~I: 12\,cm
circular reference, 15\,N step push, 3 cycles / 24\,s, contact-window
metrics averaged over all three push events. We sweep the
corrective-force bound $F_\text{max}\in\{150,20,5,1,0\}$\,N, holding
every other parameter fixed, emulating progressively severe loss of
additive correction authority; $F_\text{max}=0$\,N is the limiting case
in which the QP contributes nothing.

\textbf{Results.} Under the normal protocol ($F_\text{max}=150$\,N), C8
matches C7 to measurement precision: RMS/contact/peak/SS is
12.84/0.32/1.47/0.028\,mm versus 12.73/0.32/1.48/0.029\,mm. The stress sweep
is the relevant comparison:

\begin{table}[!t]
\caption{Correction-Authority Ablation: $F_\text{max}$ Sweep, RMS /
Peak (mm)}
\label{tab:backbone}
\renewcommand{\arraystretch}{1.0}
\footnotesize
\setlength{\tabcolsep}{3pt}
\begin{tabular}{lccccc}
\toprule
$F_\text{max}$ [N] & 150 & 20 & 5 & 1 & 0\\
\midrule
C7 -- RMS cont. & 0.32 & 0.32 & 317.9 & 416.1 & 430.0\\
C7 -- peak      & 1.48 & 1.47 & 412.4 & 554.3 & 565.9\\
\textbf{C8 -- RMS cont.} & \textbf{0.32} & \textbf{0.32} & \textbf{41.3} & \textbf{69.8} & \textbf{78.5}\\
\textbf{C8 -- peak}      & \textbf{1.47} & \textbf{1.47} & \textbf{53.5} & \textbf{95.8} & \textbf{109.3}\\
\bottomrule
\end{tabular}
\end{table}

\textbf{Analysis.} At $F_\text{max}\ge20$\,N there is enough headroom
that both controllers are unaffected and C8 costs nothing relative to
C7 --- the backbone's restoring term is exactly what the unconstrained
QP would already have produced. Below that, C7's contact-window RMS
jumps by roughly three orders of magnitude (0.32 $\to$ 430\,mm) the
moment the corrective force can no longer supply the needed authority:
with $\Fmpc$ forced toward zero, the commanded torque approaches bare
feedforward against the sustained push. \textbf{C8 degrades more gracefully
over the tested 24\,s protocol}, reaching 41--79\,mm even in the
$F_\text{max}=0$ case --- a 5.5--7.7$\times$ smaller error than C7 across the
curtailed range. This finite-duration result is not a boundedness proof. The impedance-equivalence and offset-free
statements of Theorems~1--2 of the main paper are proved for the standard
(non-backbone) realization; C8's additive term retains the same
offset-free mechanism, but its formal re-derivation is left to future work.
This deployment ablation is distinct from the energy tank, which addresses
coupled-system stability rather than QP-authority loss. Reproduced by
\texttt{stable\_backbone\_comparison.py --n-cycles 3}.

\section{Active Torque-Constraint Stress Test}\label{sec:tauactive}

The stress test above tightens the \emph{force} box $F_\text{max}$; every
benchmark elsewhere in this paper keeps $\tau_\text{max}$ at its full FR3
value, where the applied-torque row (eq.~(9b) of the main paper) never
actually saturates (raw excess $<2\times10^{-5}$\,Nm, Table~\ref{tab:fair} discussion).
This section tightens $\tau_\text{max}$ directly via a multiplicative
scale $s$ and asks what the torque row costs and buys once it can
genuinely bind, comparing four otherwise-identical 100\,Hz controllers on
the Benchmark~I protocol:

\begin{itemize}[leftmargin=*,itemsep=1pt,topsep=2pt]
\item \textbf{C5} --- the row embedded as in the main paper: on
infeasibility, the released implementation returns zero correction rather
than claiming recursive feasibility (\S\ref{sec:layer2}).
\item \textbf{Unconstrained} --- the row omitted entirely; the plant
still clips physically (\texttt{FR3MuJoCoEnv.apply\_torque}), so this
isolates whether the QP's own optimum accounts for $\tau_\text{max}$, not
whether the robot ever exceeds it.
\item \textbf{Observer} --- the fixed-gain baseline of
\S\ref{sec:experiments} under the same tightened budget, using the same
torque row as C5 but a single-step, non-predictive QP.
\item \textbf{Soft} --- the same row relaxed with a slack variable
$s\ge0$: $-\tau_\text{max}-\text{tb}-s\le\tau_\text{base}+J_v^\top\Fmpck{0}\le\tau_\text{max}-\text{tb}+s$,
penalized $\tfrac12\rho\|s\|^2$ in the QP cost instead of the hard
zero-correction fallback, so the QP is always feasible. Implemented as an
extended decision variable $[U;s]\in\mathbb R^{37}$ (one slack per joint
row) added to the same QP (not a separate solve); $\rho=1$, selected by a
small sweep --- the code's own monkeypatchable default is $\rho=10^4$,
used only when this override is unset, so any independent reproduction
must set it explicitly.
\end{itemize}

\begin{table}[!t]
\caption{Torque-Budget Sweep: Contact RMS [mm] / QP Failures [\%]}
\label{tab:tauactive}
\renewcommand{\arraystretch}{1.0}
\footnotesize
\setlength{\tabcolsep}{3pt}
\begin{tabular}{lcc}
\toprule
$\tau_\text{max}$ scale & 1.0 & 0.32\\
\midrule
C5            & 0.53 / 0\%          & 550.5 / 72.5\%\\
Unconstrained & \textbf{0.53} / 0\% & 0.53 / 0\%\\
Observer      & 0.52 / 0\%          & 0.51 / 0\%\\
Soft          & 0.53 / 0\%          & \textbf{0.52} / 0\%\\
\bottomrule
\end{tabular}
\end{table}
A third, more severe scale (0.2) was excluded from this table: there,
Soft's own slack diverges to non-physical magnitudes (max 122{,}122\,Nm,
RMS 3239\,Nm, active on 100\% of solves) alongside severe tracking loss
(303.6\,mm) --- not graceful degradation, so not an informative
comparison point.

\textbf{Analysis.} At the paper's operating point (scale 1.0) all four are
indistinguishable, confirming the torque row costs nothing when it never
binds; Soft itself draws on at most 0.35\,Nm of slack here, active on
0.3\% of solves. Tightening the budget separates them. At scale 0.32,
C5's QP is infeasible on 72.5\% of solves --- 1740 of those 1741 solves
reported by the solver as genuinely primal infeasible, not a numerical or
iteration-limit artifact --- and the zero-correction fallback degrades
tracking to 550\,mm, worse than every alternative. Unconstrained is
essentially untouched, since the plant's physical clip alone recovers a
usable command, at the cost of that clip itself: its raw command exceeds
the robot's actual hardware limit by up to 5.2\,Nm on 0.05\% of samples,
at every scale, since it never consults $\tau_\text{max}$ at all. Soft
recovers essentially all of the lost performance (0.52\,mm, matching
Unconstrained) by drawing on genuinely nonzero slack: up to 10.55\,Nm,
RMS 0.65\,Nm, active on 42\% of solves --- Soft does exceed the tightened
test row it relaxes, so its raw command is not held within that synthetic
budget. What it respects instead is the robot's actual, full-scale
hardware limit: Soft never triggers the plant's physical clip at this
scale (0\% clipped samples). This is the constraint's positive case,
stated precisely: relaxing the same row that degrades sharply under a
hard bound trades a bounded, quantified overshoot of the synthetic test
budget for a solution that, unlike Unconstrained's, never needs the
plant's clip to stay within the robot's real limit, while matching
Unconstrained's tracking and avoiding the hard row's all-or-nothing
failure. Extending this behavior to the excluded, more severe scale with
a single fixed $\rho$ is left to an adaptive or scale-aware weighting, a
natural next step.
Reproduced by \texttt{torque\_constraint\_active\_experiment.py} and
\texttt{soft\_constraint\_diagnostic.py} (slack/excess instrumentation).

\section{Low-Stiffness Transient Compliance and Recovery}\label{sec:lowk}

Every comparison elsewhere in this paper realizes a high closed-loop
stiffness (5441\,N/m calibrated per Table~\ref{tab:fair}, or C4/C5's own
Riccati-image gain), so peak deflection under the 15\,N push is only a
few mm --- closer to a stiff trajectory tracker with a disturbance
observer than to a controller that visibly yields to contact and then
recovers. This section tests the architecture's other end of the design
space: a deliberately low prescribed stiffness $K_d=100$\,N/m (matching
C2's admittance virtual-spring stiffness), realized through a
cost-shaped QP variant that directly penalizes deviation of the force
sequence from the prescribed law's own rollout ($H=\bar R$,
$h=-\bar RU_\text{imp}$ in the main paper's condensed-QP notation):
unlike Theorem~\ref{thm:equiv}'s general Riccati image, which reports
whatever gain the LQ cost happens to realize, this shaping makes the
unconstrained minimizer exactly $\Fmpc=K_de+D_d\dot e-\hat d$ for any
chosen $K_d,D_d$, with and without the $-\hat d$ term. $D_d=2\zeta\sqrt{K_d}$ assumes a unit
effective mass; $\Laminv$ is generally anisotropic (the same caveat
already noted for the C8 backbone's $D_\text{bb}$ above), so $\zeta=1$ is
only nominal critical damping and produced a visibly underdamped,
oscillating response at this low $K_d$ when traced directly (raw error
vs.\ time, not just the summary RMS); a small sweep selects $\zeta=3$ for
clean, non-oscillatory settling.

\begin{table}[!t]
\caption{Low-Stiffness (100\,N/m) Yield-Then-Recover}
\label{tab:lowk}
\renewcommand{\arraystretch}{1.0}
\footnotesize
\setlength{\tabcolsep}{3.5pt}
\begin{tabular}{lrrrr}
\toprule
\textbf{Controller} & \textbf{Peak [mm]} & \textbf{Settle$<$2\,mm [s]} & \textbf{SS [mm]} & \textbf{Peak power [W]}\\
\midrule
No estimator   & 114.2         & never          & 113.2         & 5.7\\
With estimator & \textbf{12.9} & \textbf{2.67}  & \textbf{1.16} & 11.0\\
\bottomrule
\end{tabular}
\end{table}

Without the estimator, the response approaches (not fully, since the
reference moves along the circle while the push is one-directional and
$K_d$ acts on all three axes) the static equilibrium bias
$F_h/K_d\approx150$\,mm and never drops below the 2\,mm settling
tolerance: a permanent offset, as expected of any finite-stiffness law.
With the estimator, the same prescribed stiffness visibly yields to
contact --- a genuine 12.9\,mm transient, an order of magnitude more
compliant than C4/C5's few-mm response at their calibrated stiffness ---
and then recovers to 1.16\,mm within 2.67\,s: transient compliance
followed by offset-free recovery, not only disturbance rejection around a
stiff equilibrium. Peak positive joint power stays low in both cases (5.7
and 11.0\,W) since the prescribed stiffness itself is low. Reproduced by
\texttt{low\_stiffness\_reference\_holding.py}.

\bibliographystyle{IEEEtran}
\bibliography{references}

\end{document}